\newcommand\MyBox[2]{
  \fbox{\lower0.75cm
    \vbox to 1.7cm{\vfil
      \hbox to 1.7cm{\hfil\parbox{1.4cm}{#1\\#2}\hfil}
      \vfil}%
  }%
}
\newtheorem{theorem}{Theorem}
\icmltitlerunning{Clustering on the Edge: Learning Structure in Graphs}
\begin{document} 

\twocolumn[
\icmltitle{Clustering on the Edge: Learning Structure in Graphs}

\icmlauthor{Matt Barnes}{mbarnes1@cs.cmu.edu}
\icmladdress{Carnegie Mellon University,
            5000 Forbes Avenue, Pittsburgh, PA 15213 USA}
\icmlauthor{Artur Dubrawski}{awd@cs.cmu.edu}
\icmladdress{Carnegie Mellon University,
            5000 Forbes Avenue, Pittsburgh, PA 15213 USA}

\icmlkeywords{Planted partition model, Stochastic block model, Correlation clustering}

\vskip 0.3in
]

\begin{abstract} 
With the recent popularity of graphical clustering methods, there has been an increased focus on the information \emph{between} samples. We show how learning cluster structure using edge features naturally and simultaneously determines the most likely number of clusters and addresses data scale issues. These results are particularly useful in instances where (a) there are a large number of clusters and (b) we have some labeled edges. Applications in this domain include image segmentation, community discovery and entity resolution. Our model is an extension of the planted partition model and our solution uses results of correlation clustering, which achieves a partition $\mathcal{O}(log(n))$-close to the log-likelihood of the true clustering.
\end{abstract} 

\section{Introduction}
Graphical approaches to clustering are appealing because they offer a natural way to compare samples, in the form of edge information.
Broadly, a good clustering should have similar nodes in the same cluster and dissimilar nodes in different clusters. However, which graph to use for clustering remains an open question \cite{Luxburg2007}. Previous work has considered edges to be the output of a similarity function\footnote{A similarity function is a function of two nodes' features, e.g. the RBF kernel} (e.g. spectral clustering), a Bernoulli random variable (e.g. stochastic block models), or some more general measure of similarity/dissimilarity (e.g. correlation clustering).

In reality, edge information can take a variety and multiplicity of forms. Edges in social graphs correspond to communication exchanges, mutual interests and types of relationships. In biology, protein-protein interaction networks involve complex underlying mechanisms and conditions under which an event may occur. And in economics, trades constitute numerous goods, prices and transaction types.

We are inspired by the complex interactions happening around us. Our relationships are more complicated than friend/not-friend, and our transactions are about more than the monetary value.
The motivation of this paper is to cluster with the additional information provided by multivariate edge features. This is partly supported by Thomas and Blitzstein's \yrcite{Thomas2011} recent results showing that converting to a binary graph makes recovering a partition more difficult. We are also interested in how to choose similarity functions which better capture the relationship between nodes, one of the challenges of spectral clustering \cite{Luxburg2007}. Choosing a scalar similarity function (e.g. the RBF kernel) may be overly restrictive and underutilize useful information. This is partly the cause of scale issues in spectral clustering \cite{Zelnik2004}. Our approach allows more complex similarity functions, such as the absolute vector difference.

We believe these results will be particularly useful for image segmentation, community discovery and entity resolution. These are all applications (a) with a large number of clusters and (b) where we have access to some labeled edges. With a large number of clusters, it is unlikely we have training samples from every class, let alone enough samples to train a multi-class supervised classifier. However, the small number of labeled edges will enable us to learn the typical cluster structure.

In this paper, we extend the planted partition model to general edge features. We also show how to partially recover a maximum likelihood estimator which is $\mathcal{O}(\log(n))$-close to the log likelihood of the true MLE by using an LP-rounding technique. Much of the analysis in planted partition models consider the probability of exactly recovering the partition. Depending on the cluster sizes and number of samples, this is often improbable. Our analysis addresses how good the result will be, regardless if it is exactly correct. Further, our theoretical results provide some insights on how to perform edge feature selection or, likewise, how to choose a similarity function for clustering. Experimental results show interesting clustering capabilities when leveraging edge feature vectors.

\section{Related Work}
Two areas of research are closely related to our work. Our graphical model is an extension of the stochastic block model from the mathematics and statistics literature. We also use some key results from correlation clustering in our algorithm and analysis.

\subsection{Stochastic Block Model}
The stochastic block model (SBM) was first studied by Holland et al. \yrcite{Holland1983} and Wang and Wong \yrcite{Wang1987} for understanding structure in networks. In its simplest form, every edge in the graph corresponds to a Bernoulli random variable, with probability depending on the two endpoints' clusters. In planted partition models\footnote{There is some inconsistency in the literature regarding the distinction between planted partition and stochastic block models. Occasionally the terms are used interchangeably} there are two Bernoulli probabilities $p$ and $q$ corresponding to if the endpoints are in the same or different clusters, respectively. These models are actually generalizations of the Erd\H{o}s-R\'enyi random graph, where $p=q$. Random graph models have a storied history and include famous studies such as the small-world experiment (popularized as ``six-degrees of separation'') by Milgram \yrcite{Milgram1967} and Zachary's Karate Club network \yrcite{Zachary1977}. For a more complete overview, we refer the interested reader to the review by Goldenberg et al. \yrcite{Goldenberg2010}.

More recently, planted partition models have gained popularity in the machine learning community for clustering. McSherry \yrcite{McSherry2001} and Condon \& Karp \yrcite{Condon2001} provided early spectral solutions to exactly recovering the correct partition, with probability depending on a subset of the parameters $p$, $q$, the number of samples $n$, the number of clusters $k$, and the smallest cluster size. Most results show recovering the partition is more difficult when $p$ and $q$ are close, $n$ is small, $k$ is large, and the smallest cluster size is small. Intuitively, if there are a high proportion of singleton clusters (i.e. ``dust''), mistaking at least one of them for noise is likely.

Some of the numerous alternative approaches to recovering the partition include variational EM \cite{Daudin2008, Airoldi2008, Park2010}, MCMC \cite{Park2010}, and variational Bayes EM \cite{Hofman2008, Aicher2013}. Some of these approaches may also be applicable to the model in this paper, though we found our approach simple to theoretically analyze.

The work most closely related to ours extends the stochastic block model edge weights to other parametric distributions. Motivated by observations that Bernoulli random variables often do not capture the degree complexity in social networks, Karrer \& Newman \yrcite{Karrer2011}, Mariadassou et al. \yrcite{Mariadassou2010} and Ball et al. \yrcite{Ball2011} each used Poisson distributed edge weights. This may also be a good choice because the Bernoulli degree distribution is asymptotically Poisson \cite{Xiaoran2007}. Aicher et al. considered an SBM with edge weights drawn from the exponential family distribution \yrcite{Aicher2013}. Like Thomas \& Blitzstein \yrcite{Thomas2011}, he also showed better results than thresholding to binary edges. Lastly, Balakrishnan et al. \yrcite{Balakrishnan2011} consider Normally distributed edge weights as a method of analyzing spectral clustering recovery with noise.

Other interesting extensions of the SBM include mixed membership (i.e.\ soft clustering) \cite{Airoldi2008}, hierarchical clustering \cite{Clauset2007, Balakrishnan2011} and cases where the number of clusters $k$ grows with the number of data points $n$ \cite{Rohe2011, Choi2012}.  Combining our ideas on general edge features with these interesting extensions should be possible.

\subsection{Correlation Clustering}
Correlation clustering was introduced by Bansal et al. \yrcite{Bansal2004} in the computer science and machine learning literature. Given a complete graph with $\pm 1$ edge weights, the problem is to find a clustering that agrees as much as possible with this graph. There are two `symmetric' approaches to solving the problem. \textsc{MinimizeDisagreements} aims to minimize the number of mistakes (i.e.\ $+1$ inter-cluster and $-1$ intra-cluster edges), while \textsc{MaximizeAgreements} aims to maximize the number of correctly classified edges (i.e.\ $-1$ inter-cluster and $+1$ intra-cluster edges). While the solutions are identical at optimality, the algorithms and approximations are different.

The original results by Bansal et al. \yrcite{Bansal2004} showed a constant factor approximation for \textsc{MinimizeDisagreements}. The current state-of-the-art for binary edges is a 3-approximation \cite{Ailon2008}, which Pan et al. \yrcite{Pan2015} recently parallelized to cluster one billion samples in 5 seconds. Ailon et al. \yrcite{Ailon2008} also showed a linear-time 5-approximation on weighted probability graphs and a 2-approximation on weighted probability graphs obeying the triangle inequality. Demaine et al. \yrcite{Demaine2006} showed an $\mathcal{O}(\log(n))$-approximation for arbitrarily weighted graphs using the results of Leighton \& Rao \yrcite{Leighton1999}. Solving \textsc{MinimizeDisagreements} is equivalent to APX-hard minimum multi-cut \cite{Demaine2006, Charikar2003}.

For \textsc{MaximizeAgreements}, the original results by Bansal et al. \yrcite{Bansal2004} showed a PTAS on binary graphs. State-of-the-art results for non-negative weighted graphs are a 0.7664-approximation by Charikar et al. \yrcite{Charikar2003} and similar 0.7666-approximation by Swamy \yrcite{Swamy2004}. Both results are based on Goemans and Williamson \yrcite{Goemans1995} using multiple random hyperplane projections.

Later, we will use correlation clustering to partially recover the maximum likelihood estimator of our planted partition model. Kollios et al. \yrcite{Kollios2013} consider a similar problem of using correlation clustering on probabilistic graphs, although their algorithm does not actually solve for the MLE.

\section{Problem Statement}
Consider observing an undirected graph $G = (V, E)$ with $n=|V|$ vertices. Let $\psi : \left\{1,\dotsc,n\right\} \rightarrow \left\{1, \dotsc, k\right\}$ be a partition of the $n$ vertices into $k$ classes. We use the notation $\psi_{ij}=1$ if nodes $i$ and $j$ belong to the same partition, and $\psi_{ij}=0$ else. Edges $e_{ij} \in \mathbb{R}^d$ are $d$-dimensional feature vectors. Note we say that graph $G$ is `observed,' though the edges $E$ may also be the result of a symmetric similarity function $s$, where $e_{ij} = s(v_i, v_j)$.

We assume a planted partition model, $e_{ij} \sim P(e | \psi_{ij})$. From now on, we will use the shorthand $P_0( \cdot) = P(\cdot | \psi_{ij} = 0)$ and $P_1( \cdot) = P(\cdot | \psi_{ij} = 1)$. In the conventional planted partition model, $P_0$ and $P_1$ are Bernoulli distributions with parameter $q$ and $p$, respectively. However, in this work we make no assumptions about the probability density functions $P_0$ and $P_1$. We will make the key assumption that all stochastic block models make -- that the edges are independent and identically distributed, conditioned on $\psi$. Note if the edges $E$ are generated by a similarity function then it is unlikely the edges are actually independent, but we proceed with this assumption regardless.

In most planted partition models, the goal is to either partially or exactly recover $\psi$ after observing $G$. We aim to find the most likely partition, and bound our performance in terms of the likelihood. There is a subtle distinction between the two goals. Even if the maximum likelihood estimator is consistent, the non-asymptotic MLE may be different than the true partition $\psi$.

\section{Approximating the Maximum Likelihood Estimator}
Let $\theta : \left\{1,\dotsc,n\right\} \rightarrow \left\{1, \dotsc, k\right\}$ be a partition under consideration. In exact recovery, our goal would be to find a $\theta$ such that $\theta = \psi$. However, our goal is to find a partition $\hat \theta$ which is close to the likelihood of the maximum likelihood estimator $\theta_{MLE}$. Using the edge independence assumption, the likelihood $L$ is
\begin{equation}
L(\theta) = \prod_{i < j} P(e_{ij} | \theta_{ij}) \mathbbm{1}(\theta \in \Theta)
\end{equation}
where $\Theta$ is the space of all disjoint partitions. 

The trick to finding an approximation $\hat \theta$ to the MLE $\theta_{MLE}$ is to reduce the problem to a correlation clustering instance. Consider forming a graph $G_O = (V, E_0)$ with binary edges defined by the sign of the log-odds $e_{0; ij} = sign\left(\log\left(P_1(e_{ij})/P_0(e_{ij})\right)\right)$. Let the cost of mislabeling each edge be the absolute log-odds $C_{ij} = \left|\log\left(P_1(e_{ij})/P_0(e_{ij})\right)\right|$. Then we can rewrite the log-likelihood $\ell$ as\footnote{We are playing fast and loose with the $\mathbbm{1}(\theta \in \Theta)$ terms here. $G_0$ is not required to be a valid partition and thus the $\mathbbm{1}(G_0 \in \Theta)$ term is not included in $\ell(G_0)$. However, $\theta$ is still required to be a valid partition.}
\begin{align}
\ell(\theta) &= \ell(G_0) - \sum_{\theta_{ij} \neq e_{0; ij}} \left|\log\left(\frac{P_1(e)}{P_0(e)}\right)\right| \nonumber \\
&= \ell(G_0) - \sum_{\theta_{ij} \neq e_{0; ij}} C_{ij} \label{eq:log-likelihood}
\end{align}

Maximizing $\ell(\theta)$ is equivalent to minimizing $\sum_{\theta_{ij} \neq e_{0; ij}} C_{ij}$, which is exactly \textsc{MinimizeDisagreements} where edges are labeled according to $E_0$ and have weighted costs $C\geq 0$. Intuitively, we consider the most likely graph $G_0$ (which is not a valid partition) and try to find the minimum number of weighted edge flips required to create a valid partition.

Unfortunately, we only have non-negativity bounds on the weights $C$. Thus we believe the only appropriate \textsc{MinimizeDisagreements} algorithm to solve Eq \ref{eq:log-likelihood} is the LP-rounding technique by Demaine et al. \yrcite{Demaine2006}.

\begin{theorem}
The above estimated clustering $\hat \theta$ is $c_1DIS\log(n)$-close to the log-likelihood of the true maximum likelihood estimator $\hat \theta_{MLE}$. This is an $\exp(-DIS(c_1\log(n)-1))$-approximation algorithm for the likelihood.
\end{theorem}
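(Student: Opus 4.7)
The plan is to execute the reduction to weighted \textsc{MinimizeDisagreements} already sketched in Equation~(\ref{eq:log-likelihood}), then feed that instance into the Demaine et al.\ LP-rounding machinery and translate the resulting multiplicative guarantee on disagreements back into an additive guarantee on log-likelihood (and a multiplicative one on likelihood).

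First, I would set notation by letting $DIS \;=\; \sum_{\theta_{MLE,ij}\neq e_{0;ij}} C_{ij}$ be the total weighted disagreement cost incurred by the true MLE against the reference graph $G_0$. Because Equation~(\ref{eq:log-likelihood}) shows that $\ell(\theta) = \ell(G_0) - \sum_{\theta_{ij}\neq e_{0;ij}} C_{ij}$ with $\ell(G_0)$ a partition-independent constant, the maximizer of $\ell$ over $\Theta$ is exactly the minimizer of the weighted disagreement cost over $\Theta$. Hence $\theta_{MLE}$ is simultaneously the optimal solution to the \textsc{MinimizeDisagreements} instance on $(G_0,C)$, with optimal value $DIS$.

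Next, I would invoke Demaine et al.'s $\mathcal{O}(\log n)$-approximation for \textsc{MinimizeDisagreements} on arbitrarily (non-negatively) weighted graphs, which is exactly the regime we are in since $C_{ij} = |\log(P_1(e_{ij})/P_0(e_{ij}))| \geq 0$. Running their LP-rounding algorithm on $(G_0,C)$ yields the partition $\hat\theta$ satisfying
\begin{equation}
\sum_{\hat\theta_{ij}\neq e_{0;ij}} C_{ij} \;\leq\; c_1\log(n)\cdot DIS
\end{equation}
for some universal constant $c_1$ inherited from Demaine et al. Substituting into Equation~(\ref{eq:log-likelihood}) and subtracting gives
\begin{equation}
\ell(\theta_{MLE}) - \ell(\hat\theta) \;\leq\; c_1\log(n)\cdot DIS - DIS \;\leq\; c_1\,DIS\log(n),
\end{equation}
which is the first claim. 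Exponentiating, $L(\hat\theta)/L(\theta_{MLE}) \geq \exp(-DIS(c_1\log(n)-1))$, which is the second.

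I do not expect any serious obstacle: the heavy lifting has already been done by Demaine et al., and the only content is verifying that their hypotheses apply (complete graph on $V$, non-negative weights, no structural assumption on $C$) and keeping the additive/multiplicative conversion straight. The one subtlety worth flagging is the footnote caveat that $G_0$ is itself not a valid partition, so when we apply the approximation guarantee we are comparing $\hat\theta$ against the \emph{partition-constrained} optimum $\theta_{MLE}$, not against $G_0$ itself; this is fine because Demaine et al.'s guarantee is stated relative to the optimal partition, which coincides with $\theta_{MLE}$ by the equivalence above.
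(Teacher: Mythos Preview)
Your proposal is correct and follows essentially the same route as the paper: identify $DIS$ with the optimal \textsc{MinimizeDisagreements} cost via Equation~(\ref{eq:log-likelihood}), apply Demaine et al.'s $c_1\log(n)$-approximation, and convert back to log-likelihood and likelihood statements. If anything, you are slightly more careful than the paper in explicitly justifying why $\theta_{MLE}$ coincides with the partition-constrained optimum and in flagging the $G_0\notin\Theta$ subtlety.
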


The constant $c_1 = 2+ 1/\log(n+1)$ is just slightly larger than 2. $DIS$ is a measure of disagreement between the graph $G_0$ and the optimal clustering, to be discussed shortly.

\begin{proof}
The results follow directly from Leighton \& Rao \yrcite{Leighton1999} and Demaine et al. \yrcite{Demaine2006}. Let $DIS$ be the optimal solution to \textsc{MinimizeDisagreements} on graph $G_0$ with weighs $C$. Then the log likelihood of the true MLE $\theta_{MLE}$ is
\begin{equation}
\ell(\theta_{MLE}) = \ell(G_0) - DIS
\end{equation}
Demaine et al. \yrcite{Demaine2006} showed an $c_1\log(n)$-approximation to \textsc{MinimizeDisagreements} on general weighted graphs. Thus the approximated MLE using this algorithm will yield
\begin{equation}
\ell(\hat \theta) \geq \ell(G_0) - c_1\log(n)DIS
\end{equation}
The approximation ratio result follows likewise.
\begin{align}
L(\hat \theta) &\geq L(G)\exp(-c_1\log(n)DIS) \nonumber \\
L(\theta_{MLE}) &= L(G)\exp(-DIS) \nonumber \\
\frac{L(\hat \theta)}{L(\theta_{MLE})} &\geq \exp(-DIS(c_1\log(n)-1)) \nonumber
\end{align}
\end{proof}

\subsection{Choosing Edge Features or Similarity Functions}
How to choose a similarity function remains a fundamental question in spectral clustering \cite{Luxburg2007}. A ``meaningful'' similarity function should have high similarity for samples belonging to the same cluster and low similarity for samples in different clusters, but how to judge that remains unclear. In practice, the radial basis function is commonly used and often provides favorable results. More precisely, we want to know which similarity functions make clustering easier and understand why they do. 

This same question applies when doing edge feature selection. We want to choose features which are most informative for clustering and ignore the others. We can provide a more scientific answer to these questions by analyzing the $DIS$ coefficient.

\begin{theorem}
Let $n_0$ and $n_1$ be the number of inter and intra-cluster edges in $\psi$, respectively. Then
\begin{align}
\mathbb{E}[DIS] &= -n_1D_{KL}(P_1 || P_0)\Big|_{P_1 \leq P_0} \nonumber \\
& \qquad \qquad \qquad \qquad \quad - n_0D_{KL}(P_0 || P_1)\Big|_{P_0 \leq P_1}
\end{align}
where we use the notation $D(\cdot || \cdot )\Big|_{S}$ to denote the divergence evaluated only over the closed set $S$.
\end{theorem}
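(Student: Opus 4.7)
The plan is to reduce $\mathbb{E}[DIS]$ to a sum of independent per-edge expectations. Because $DIS = \min_{\theta \in \Theta} \sum_{\theta_{ij} \neq e_{0;ij}} C_{ij}$ and $\psi \in \Theta$, the true partition is a feasible solution whose disagreement cost can be decomposed by the inter/intra-cluster status of each edge. Identifying $DIS$ with the cost incurred by $\psi$ (see below for the caveat), one writes
\[
DIS = \sum_{\psi_{ij}=1} C_{ij}\,\mathbbm{1}(e_{0;ij}=0) + \sum_{\psi_{ij}=0} C_{ij}\,\mathbbm{1}(e_{0;ij}=1).
\]
Since the counts $n_1$ and $n_0$ depend only on $\psi$ (not on the random edge features), and the edges are conditionally i.i.d., linearity of expectation reduces the problem to evaluating the expected cost of a single intra- and a single inter-cluster edge.

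For an intra-cluster edge we have $e_{ij} \sim P_1$, and the edge is ``flipped'' in $G_0$ precisely when $\log(P_1(e_{ij})/P_0(e_{ij})) \leq 0$, i.e.\ on the event $\{P_1 \leq P_0\}$; on that event $C_{ij} = -\log(P_1(e_{ij})/P_0(e_{ij}))$. Integrating against $P_1$ yields
\[
\mathbb{E}_{P_1}\!\left[C_{ij}\,\mathbbm{1}(e_{0;ij}=0)\right] = -\int_{P_1 \leq P_0} P_1(e)\log\frac{P_1(e)}{P_0(e)}\,de = -D_{KL}(P_1 || P_0)\Big|_{P_1 \leq P_0}.
\]
The symmetric calculation for an inter-cluster edge (drawn from $P_0$, flipped on the region $\{P_0 \leq P_1\}$) produces $-D_{KL}(P_0 || P_1)|_{P_0 \leq P_1}$. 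Multiplying each per-edge expectation by the respective edge count and summing gives the stated identity.

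The subtle point, and the main obstacle, is the opening identification, since in principle the MinimizeDisagreements optimum on $G_0$ could strictly beat the disagreement cost of $\psi$. What the calculation above really delivers is the upper bound
\[
\mathbb{E}[DIS] \leq -n_1 D_{KL}(P_1 || P_0)\big|_{P_1 \leq P_0} - n_0 D_{KL}(P_0 || P_1)\big|_{P_0 \leq P_1}.
\]
To promote this to the equality stated in the theorem one must either argue that $\psi$ is (asymptotically) optimal for the combinatorial program -- which is plausible in the planted-partition regime given the log-likelihood decomposition in Eq.~\ref{eq:log-likelihood}, since $\psi$ is by construction the population-level maximizer -- or read the theorem as characterizing the expected disagreement cost relative to the true partition, which is precisely the population-level quantity one wants for guiding edge-feature selection. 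Under either interpretation, the remainder of the argument is the routine per-edge computation above, so the genuinely new content of the theorem is the KL-divergence form of the right-hand side rather than the combinatorial optimality issue.
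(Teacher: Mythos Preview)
Your proposal is correct and follows essentially the same route as the paper's own proof: decompose $DIS$ into per-edge contributions indexed by $\psi$, take expectations using the conditional i.i.d.\ assumption, and recognize the resulting integrals over $\{P_1\le P_0\}$ and $\{P_0\le P_1\}$ as restricted KL divergences. The paper silently makes the identification of $DIS$ with the disagreement cost at $\psi$ that you flag as a caveat, so your discussion is in fact more careful on that point than the original.
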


\begin{proof}
\begin{align}
\mathbb{E}[DIS] &= (n_1 + n_2)\mathbb{E}_{\psi_{ij} \neq e_{0; i,j}}[C] \nonumber \\
&= n_1\mathbb{E}_{\psi_{ij}=1, e_{0;i,j}=0}[C] + n_2\mathbb{E}_{\psi_{ij}=1, e_{0;i,j}=0}[C] \nonumber \\
&= n_1\int_{P_1(e) \leq P_0(e)} P_1(e)\log\left(\frac{P_0(e)}{P_1(e)}\right)de \nonumber \\
& \quad + n_2\int_{P_0(e) \leq P_1(e)} P_0(e)\log\left(\frac{P_1(e)}{P_0(e)}\right)de \nonumber \\
&= -n_1D_{KL}(P_1 || P_0)\Big|_{P_1 \leq P_0} \nonumber \\
& \qquad \qquad \qquad \qquad \quad- n_0D_{KL}(P_0 || P_1)\Big|_{P_0 \leq P_1}\nonumber 
\end{align}
\end{proof}
Notice these restricted Kullback-Leibler divergences are always negative, and thus $\mathbb{E}[DIS] \geq 0$.

The intuition here is to choose edge features or similarity functions which are unlikely to create edges in the disagreement regions (i.e.\ edges which contribute to $DIS$). If $P_0$ and $P_1$ are completely divergent, then exactly recovering the partition is trivial because $G_0$ will be the set of disconnected cliques induced by $\psi$. Additionally, when mistakes are made, we want the KL divergence to be small (i.e.\ the mistake is not too `bad').

Along these lines, choosing higher dimensional edge features and similarity functions (e.g.\ the absolute vector difference instead of the Euclidean distance) makes clustering easier, by decreasing the disagreement region between $P_0$ and $P_1$. This confirms our earlier motivation that useful clustering information may be lost by only considering binary or scalar edge features and functions.

Considering only this approximation ratio when choosing a similarity function or edge features does not quite capture the complete picture. A trivial solution is select for $P_0 = P_1$ (a type of an Erd\H{o}s-R\'enyi random graph), which results in an approximation ratio of 1. Since every partition is equally likely in this scenario, finding an approximation to the MLE is trivial. However, exactly recovering $\psi$ is unlikely.  

\subsubsection{Sparsity}
In many situations it is advantageous to induce sparsity into the graph $G$. Spectral clustering employs this trick to cluster large graphs, by only considering the most similar nodes when performing eigen decompositions. In the proposed approach, sparsity will also reduce the number of variables and constraints in the LP used to maximize Eq \ref{eq:log-likelihood}.

By the previous analysis, we want to choose a similarity function or edge features which achieve the desired sparsity while maintaining a small $DIS$ coefficient. In the \textsc{MinimizeDisagreements} problem, sparse edges will have cost $C_{ij} = 0$. This occurs when $P_0(e_{ij}) = P_1(e_{ij})$. Intuitively, the best edges to sparsify are the ones which we do not have strong evidence for whether they should be labeled positive or negative. For these edges, the probabilities $P_0$ and $P_1$ will be close. Unlike spectral clustering, which only considers the most similar edges, this sparsification considers the most similar \emph{and} the most dissimilar edges.

\section{Experiments}
\begin{figure*}[p]%
\centering
\subfigure[True Clusters]{\includegraphics[width=0.25\linewidth]{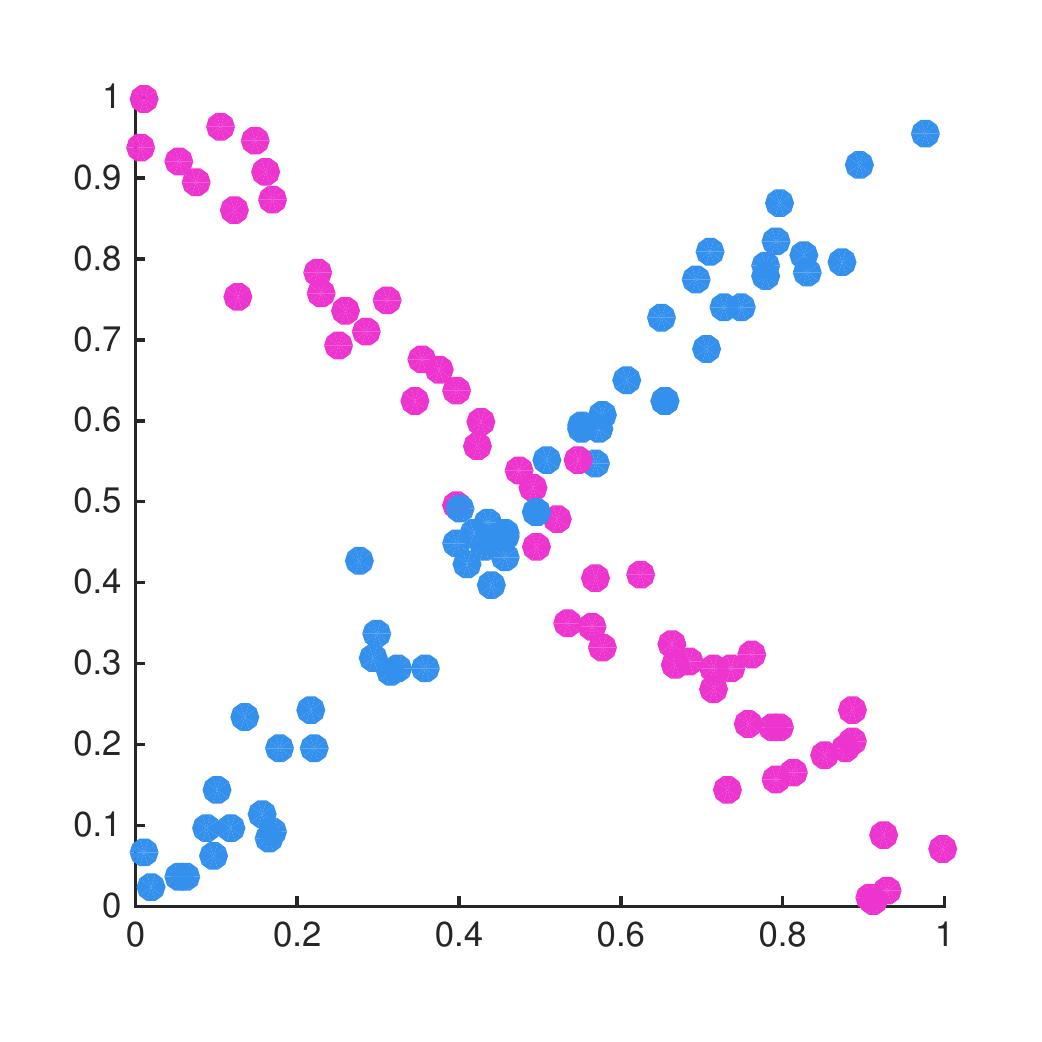}}\qquad
\subfigure[k-means]{\includegraphics[width=0.25\linewidth]{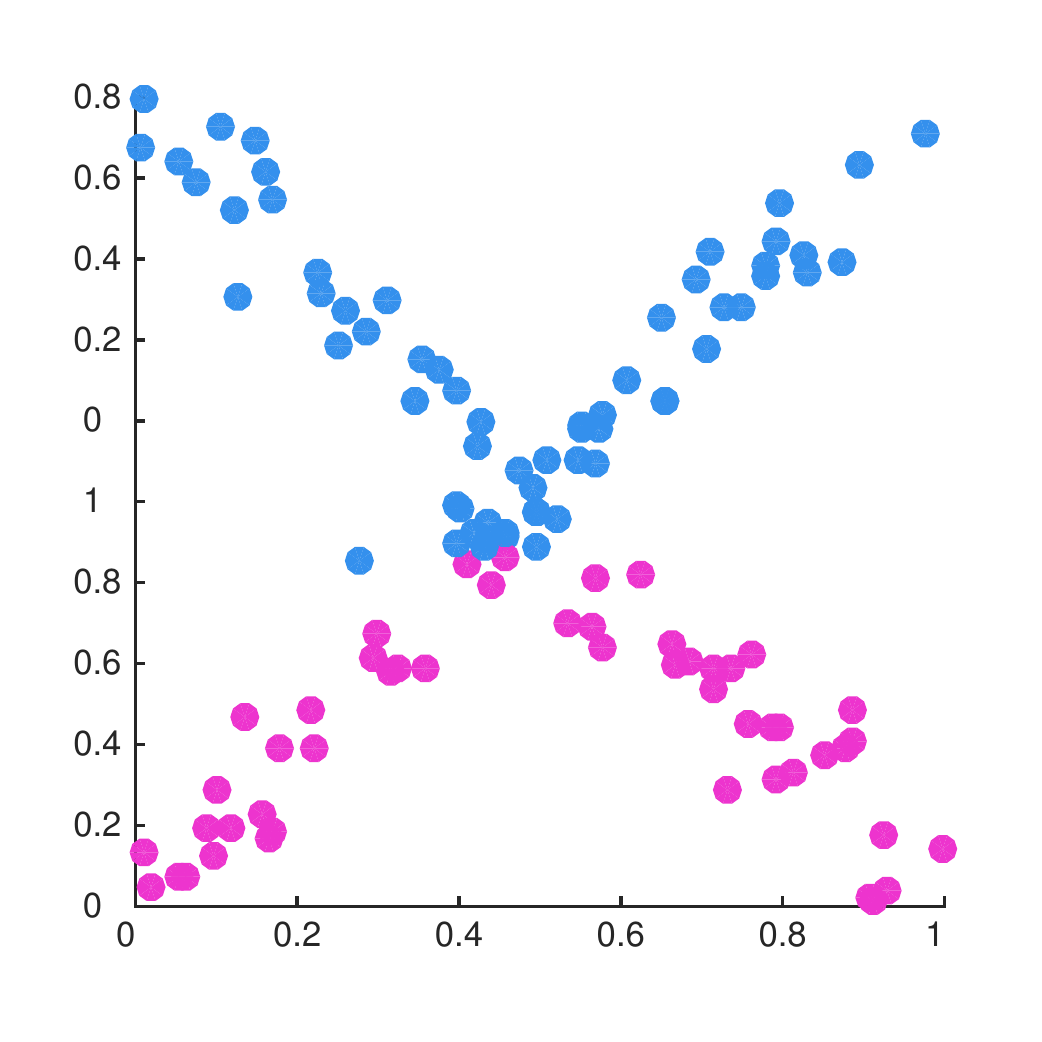}}\quad
\subfigure[Spectral]{\includegraphics[width=0.25\linewidth]{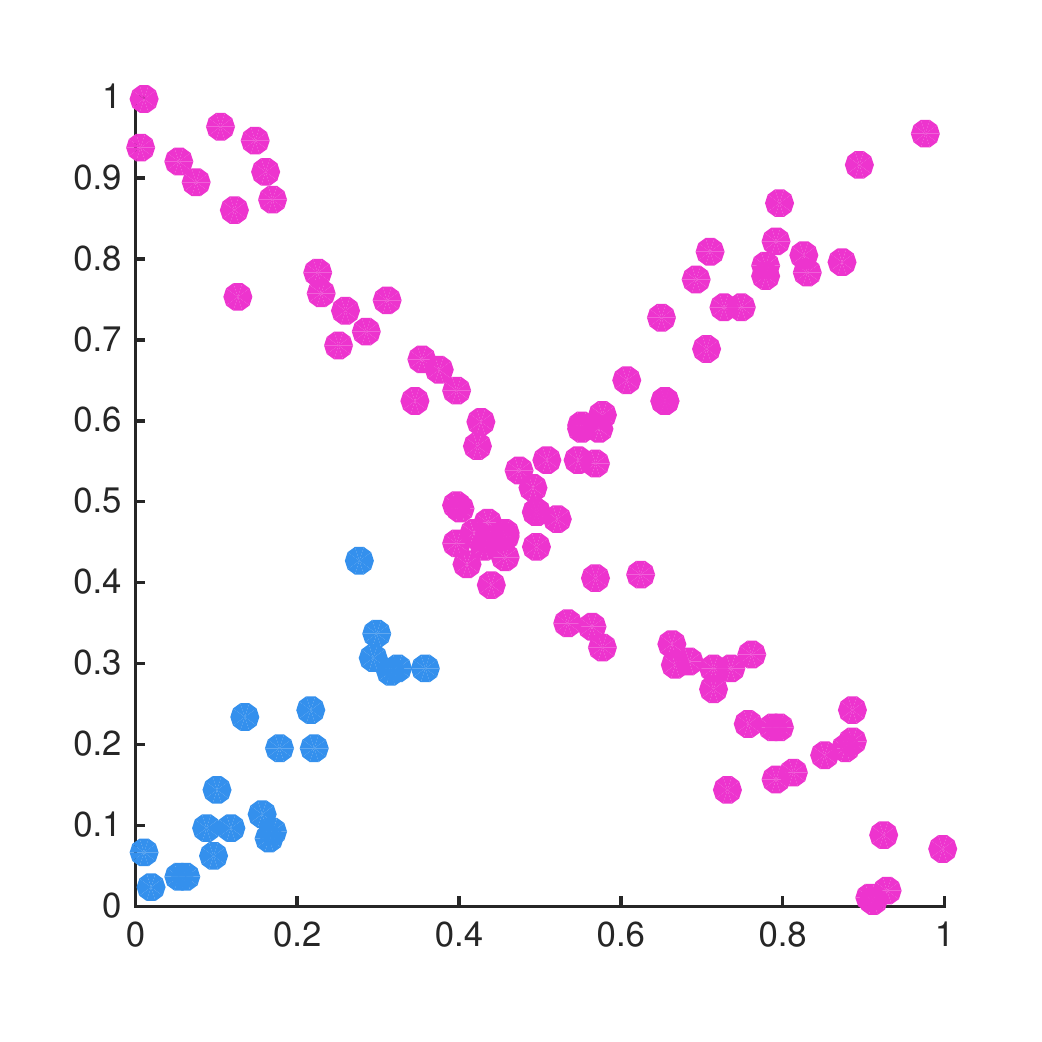}}\qquad\\
\subfigure[This Paper]{\includegraphics[width=0.25\linewidth]{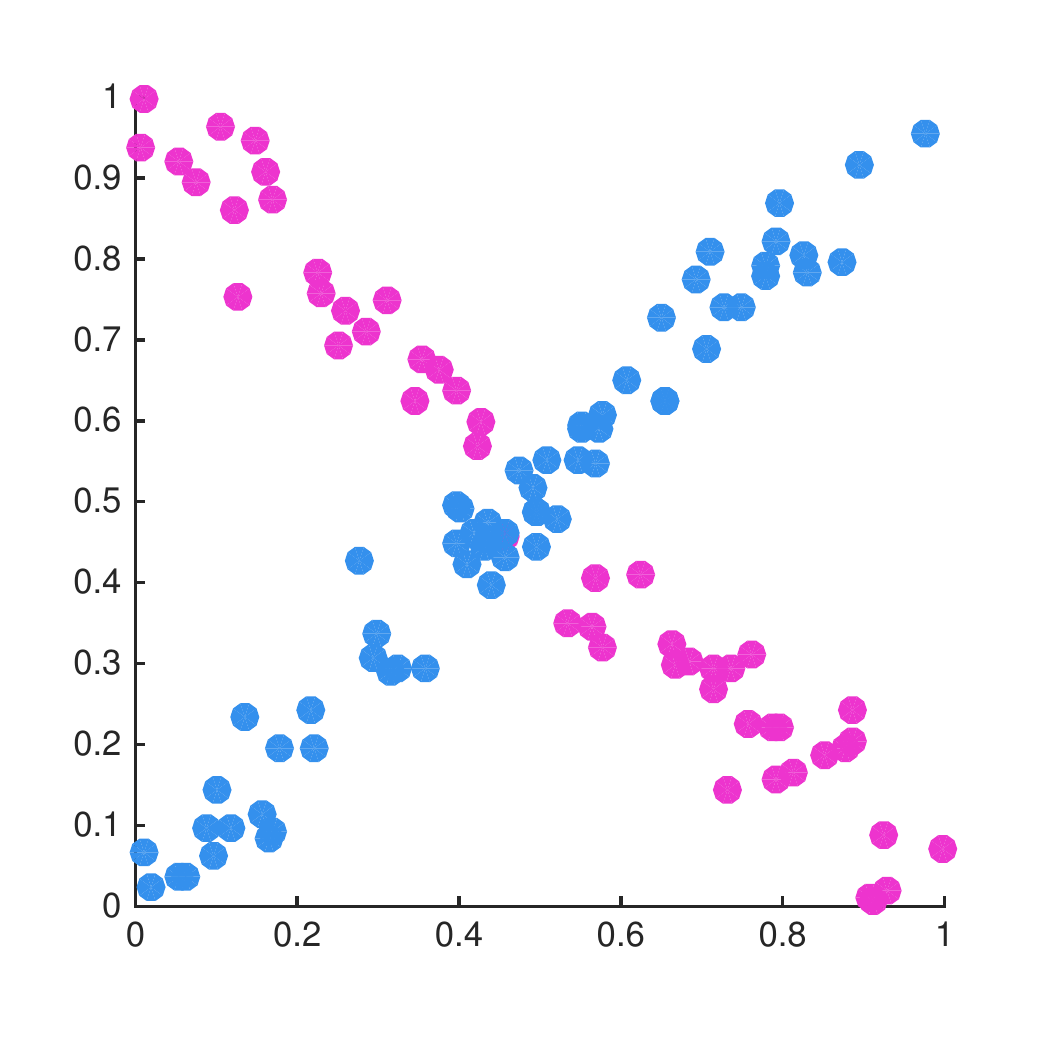}}\qquad
\subfigure[Cluster Structure $\hat P_1(e)$]{\includegraphics[width=0.25\linewidth]{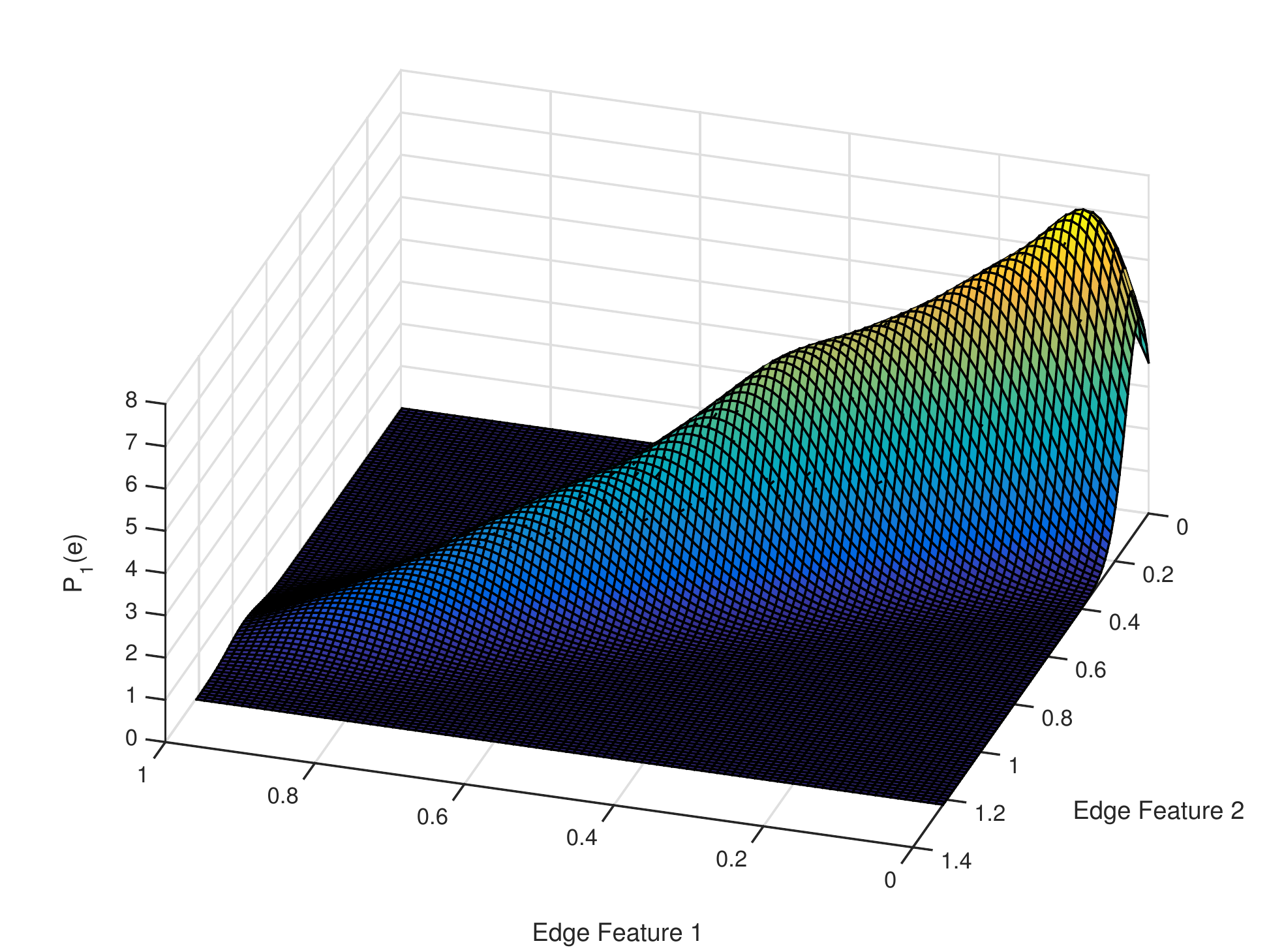}}\quad
\subfigure[Non-Cluster Structure $\hat P_0(e)$]{\includegraphics[width=0.25\linewidth]{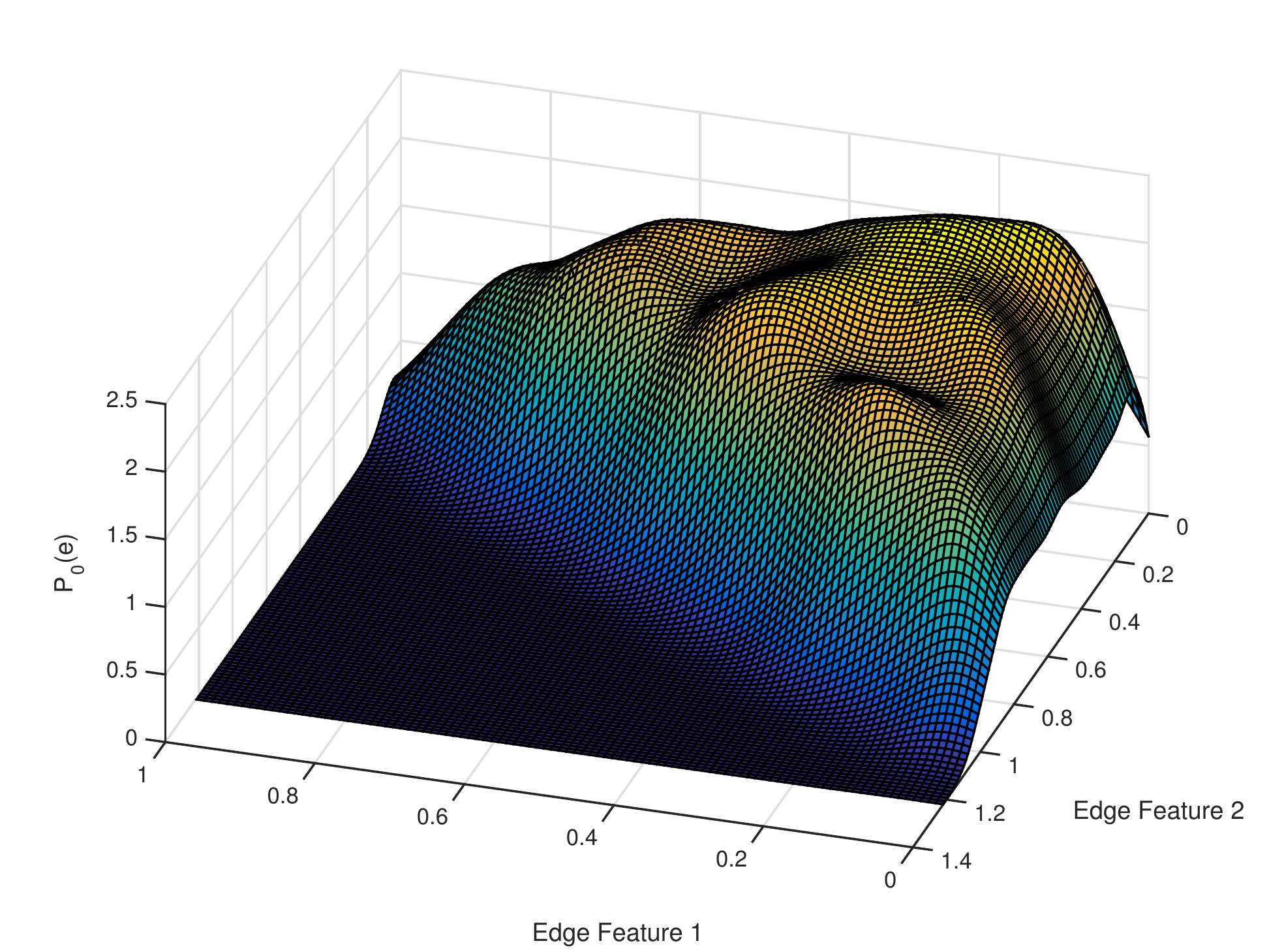}}\qquad\\
\subfigure[True Clusters]{\includegraphics[width=0.25\linewidth]{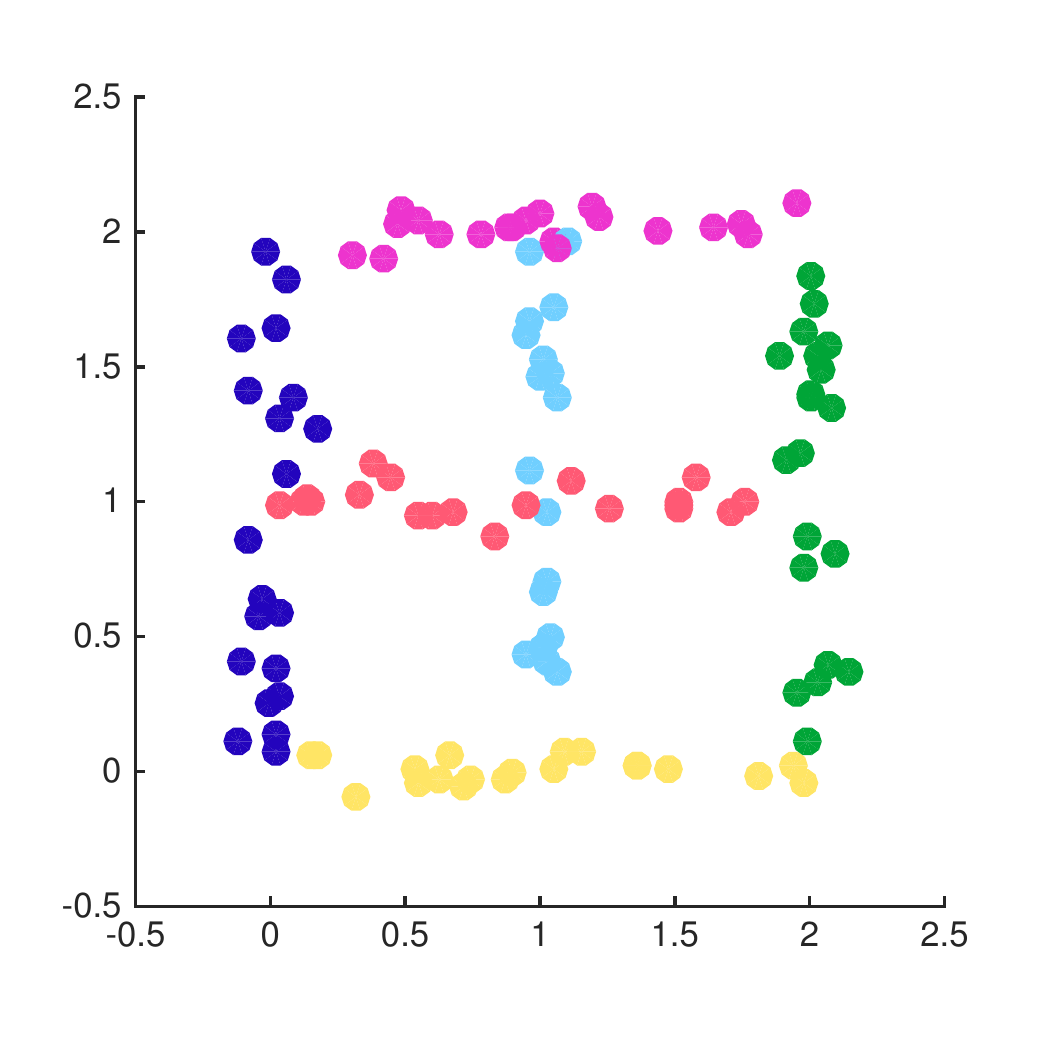}}\qquad
\subfigure[k-means]{\includegraphics[width=0.25\linewidth]{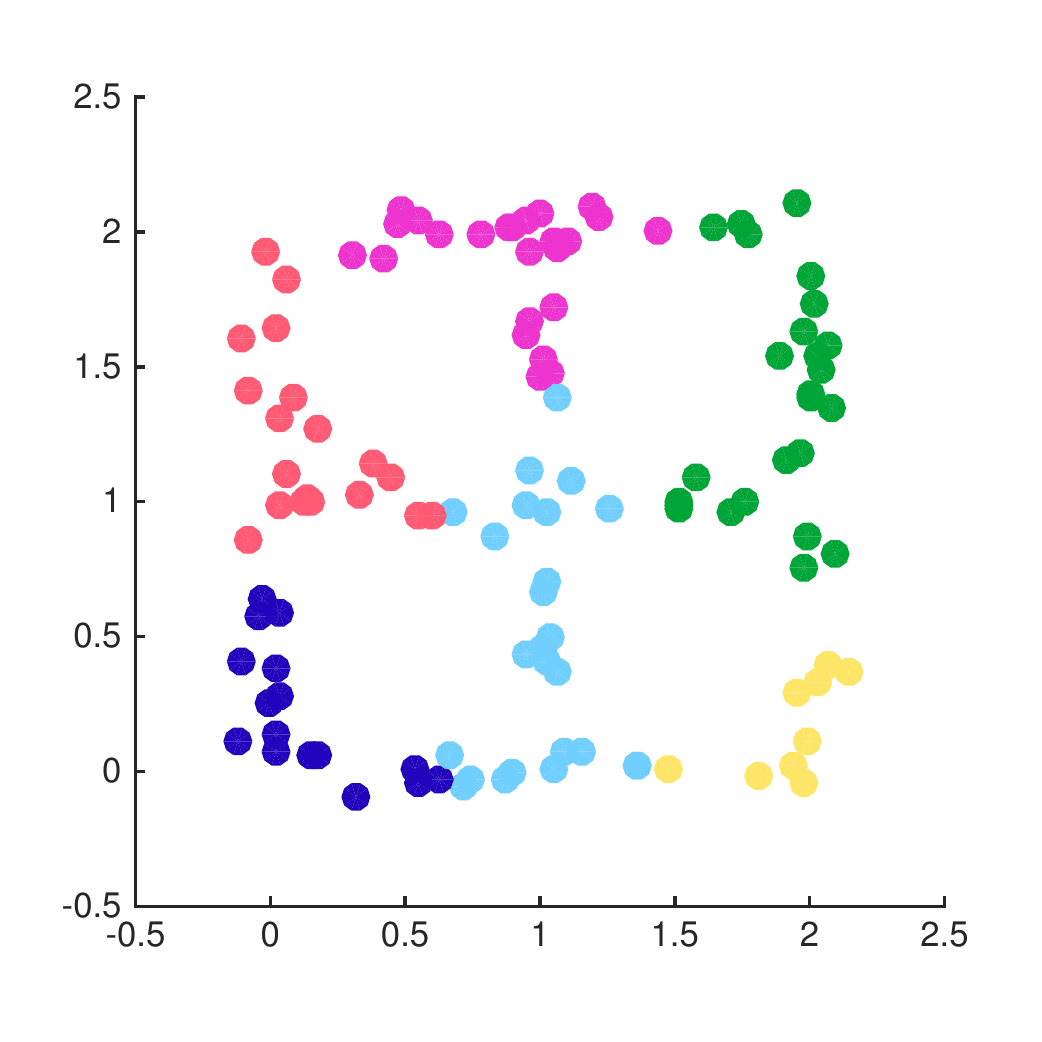}}\quad
\subfigure[Spectral]{\includegraphics[width=0.25\linewidth]{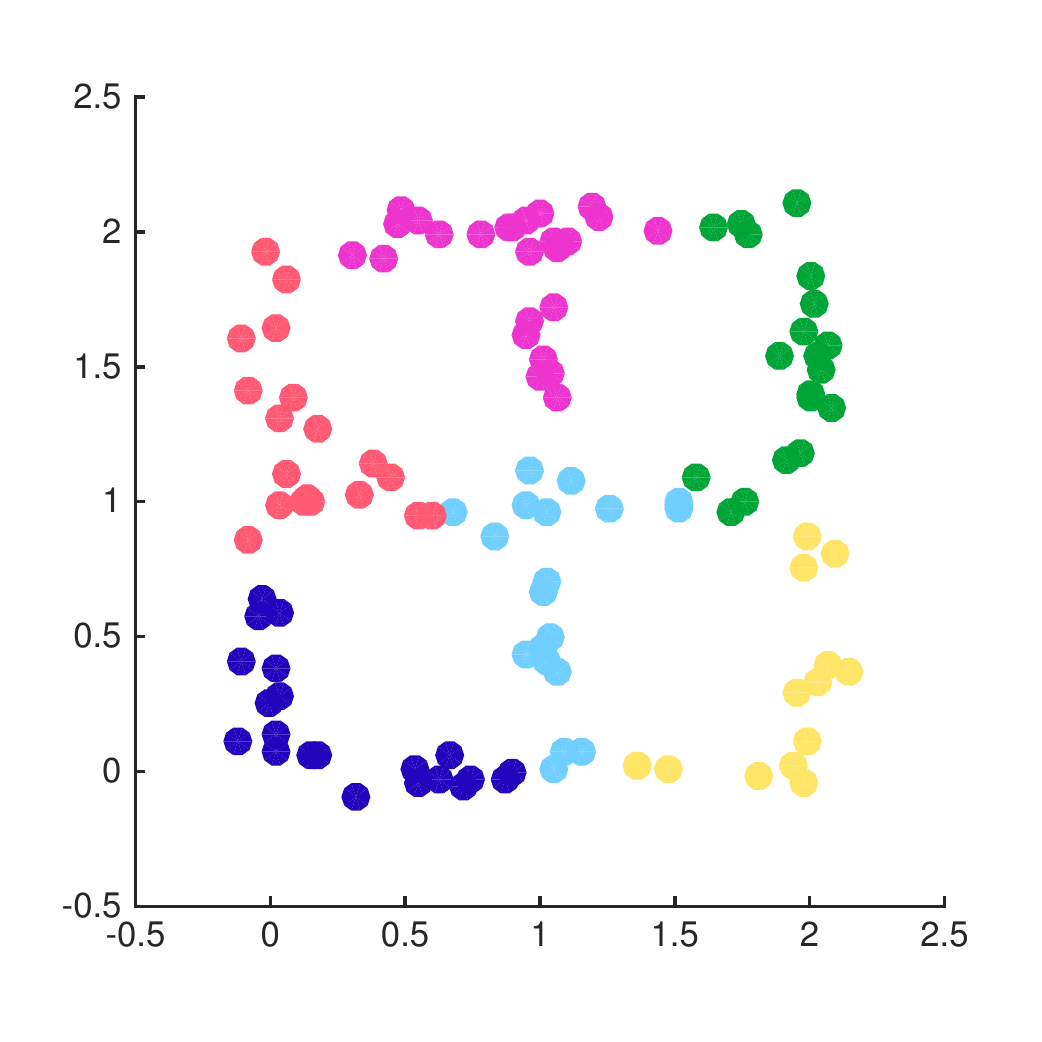}}\qquad\\
\subfigure[This Paper]{\includegraphics[width=0.25\linewidth]{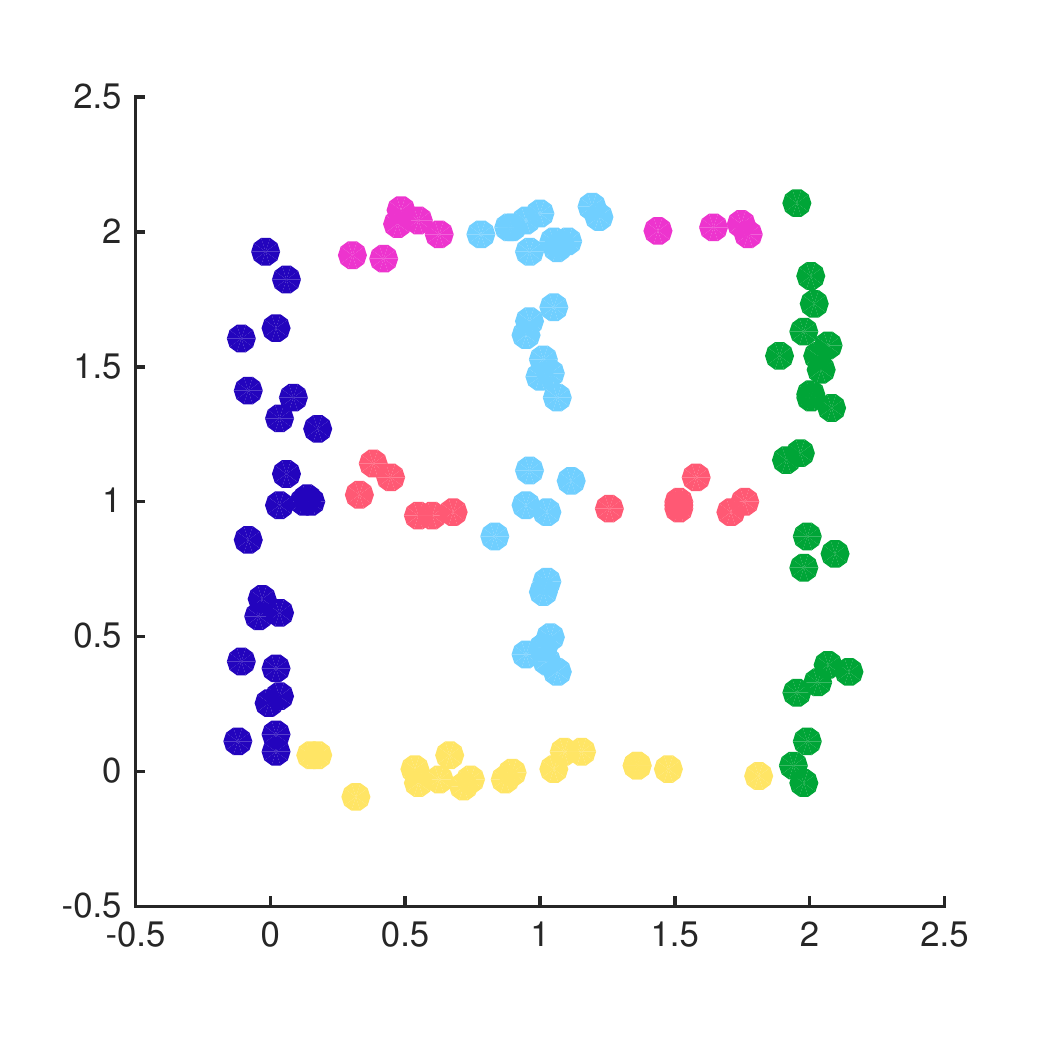}}\qquad
\subfigure[Cluster Structure $\hat P_1(e)$]{\includegraphics[width=0.25\linewidth]{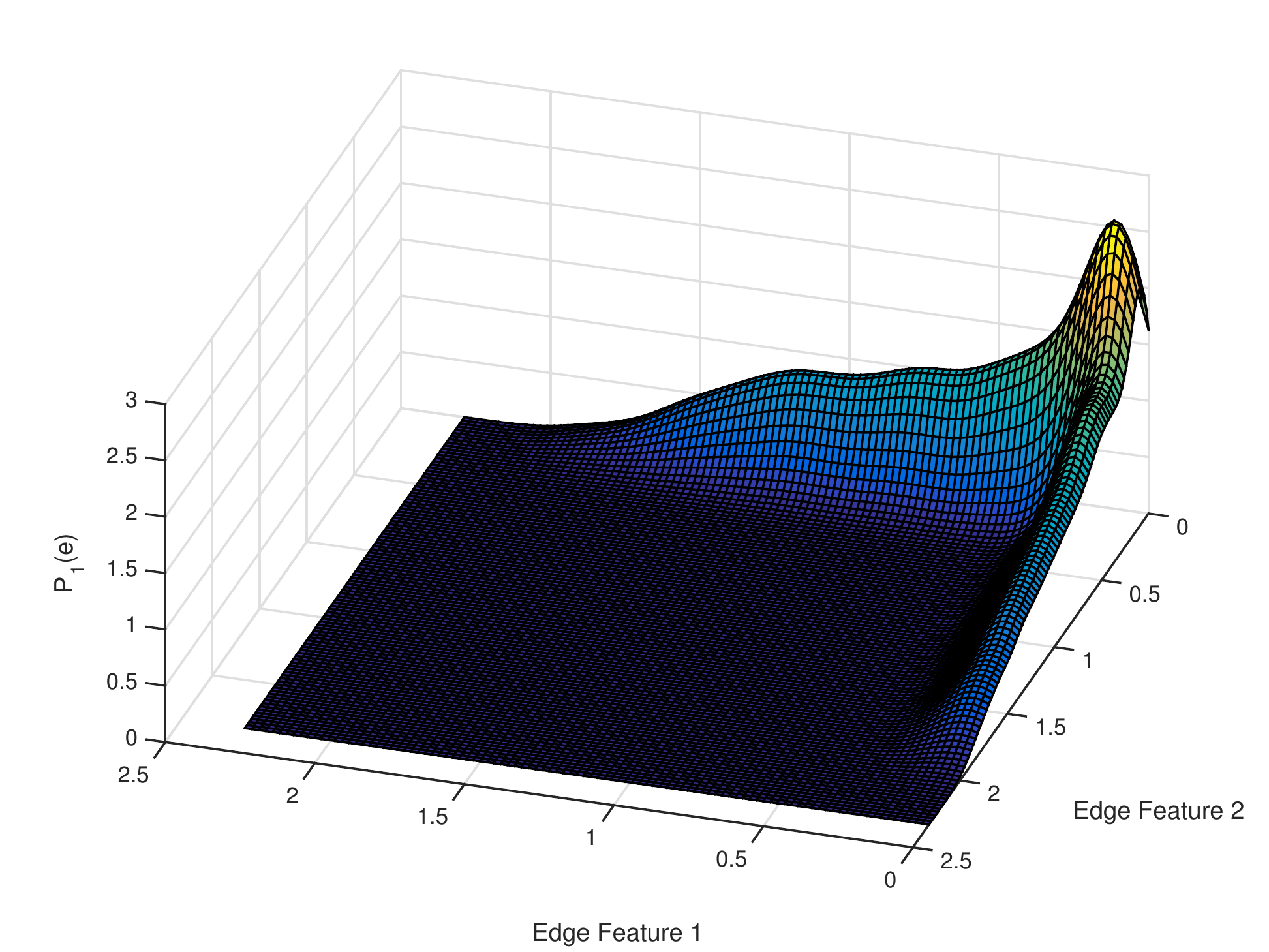}}\quad
\subfigure[Non-Cluster Structure $\hat P_0(e)$]{\includegraphics[width=0.25\linewidth]{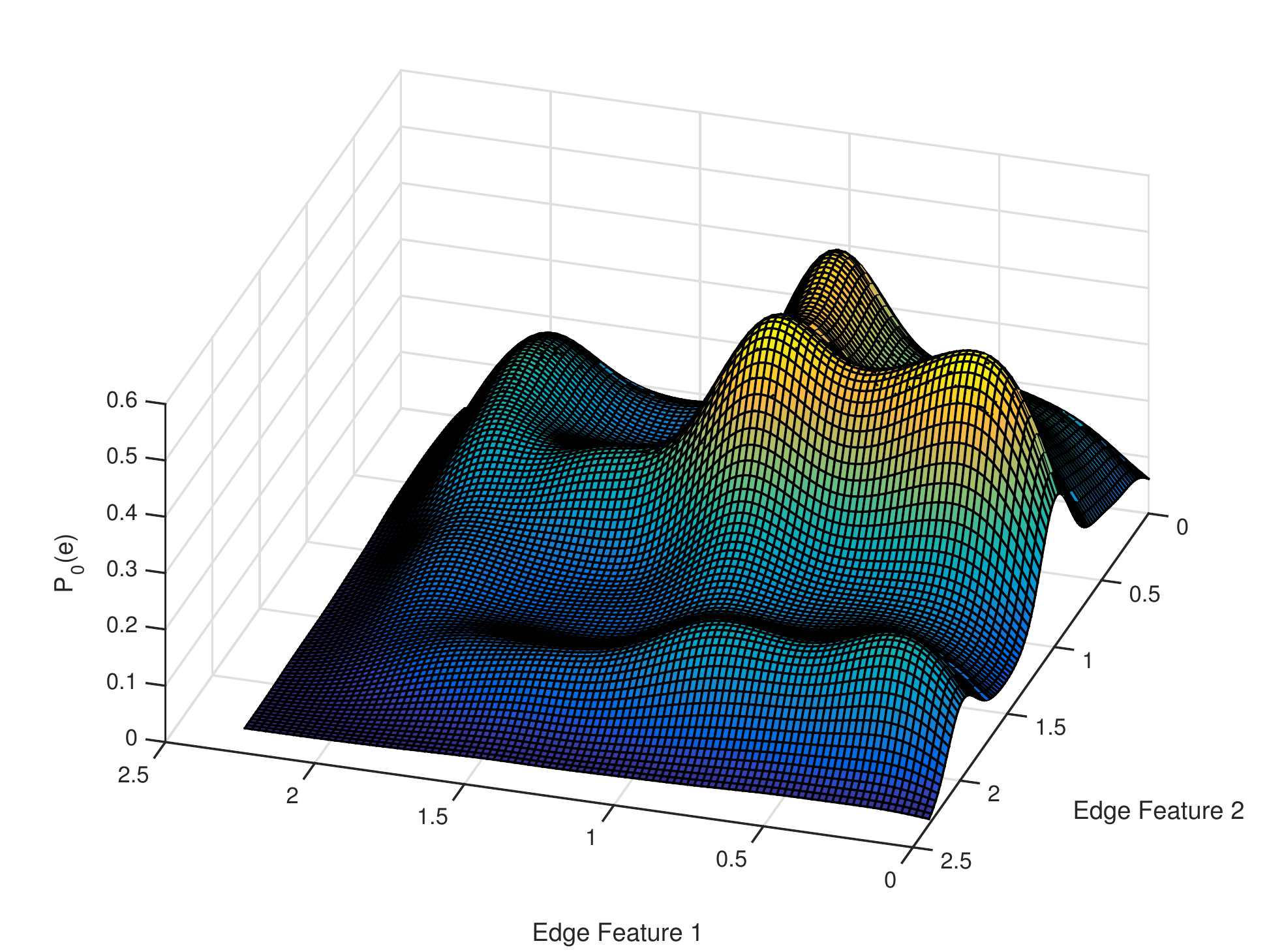}}
\caption{Results on synthetic 2D datasets. The true number of clusters $k$ is given as an input to k-means and spectral clustering, while our model consistently and naturally learns the correct number of clusters. The learned edge densities $P_0$ and $P_1$ are also shown.}
\label{fig:2D}
\end{figure*}

\begin{figure*}[t]%
\centering
\subfigure[True Clusters]{\includegraphics[width=0.4\linewidth]{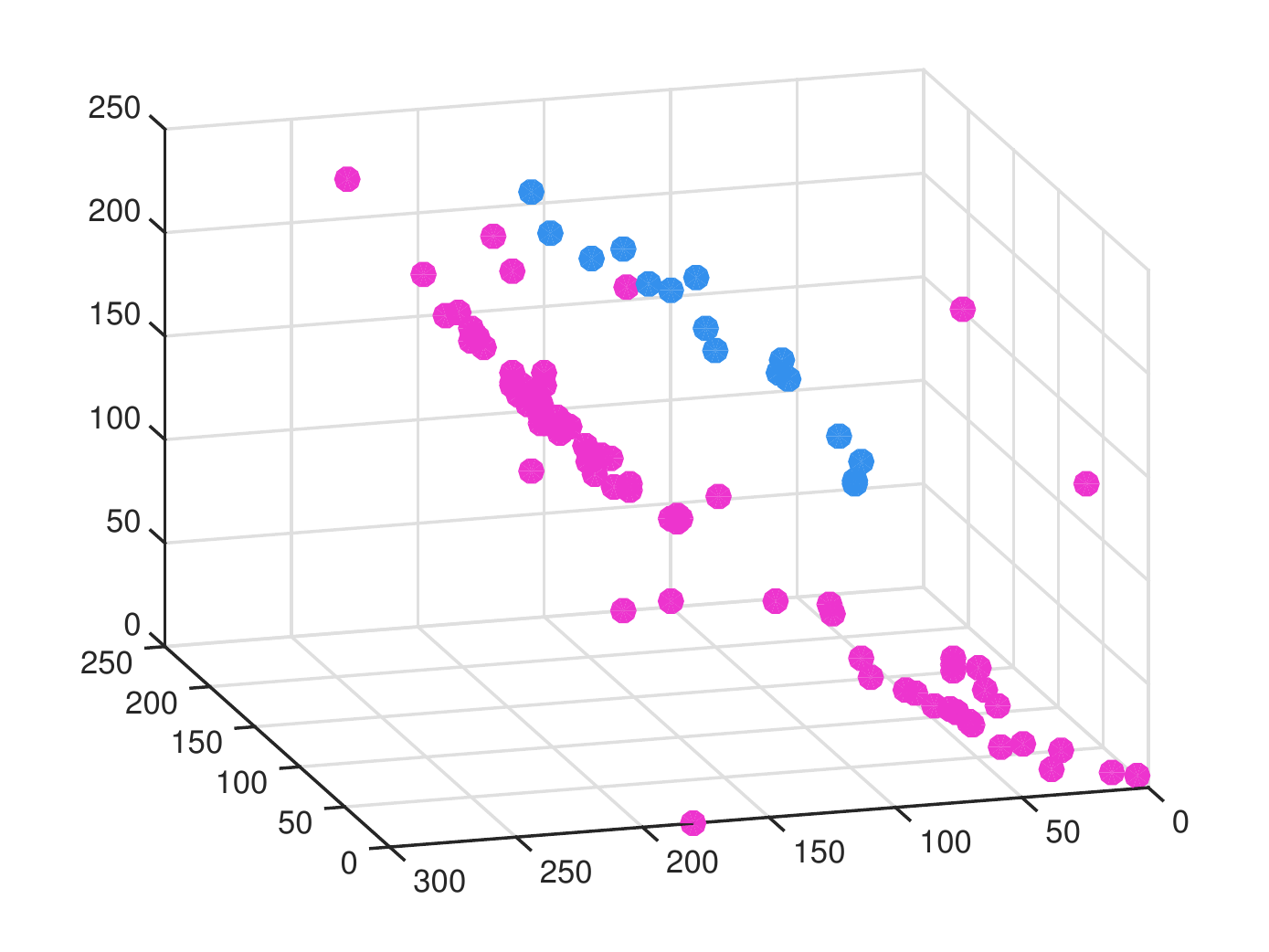}}\qquad
\subfigure[k-means]{\includegraphics[width=0.4\linewidth]{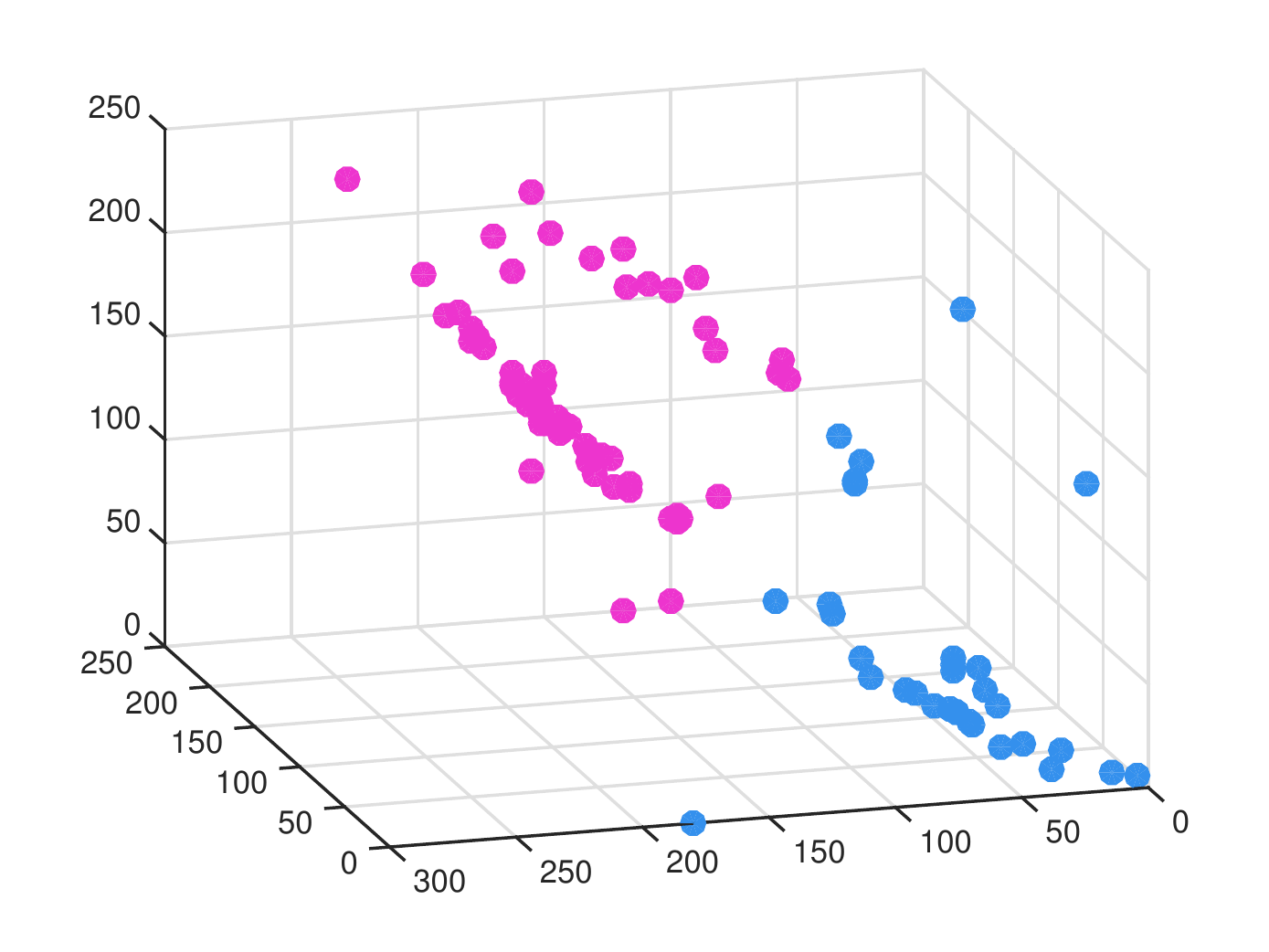}}\quad \\
\subfigure[Spectral]{\includegraphics[width=0.4\linewidth]{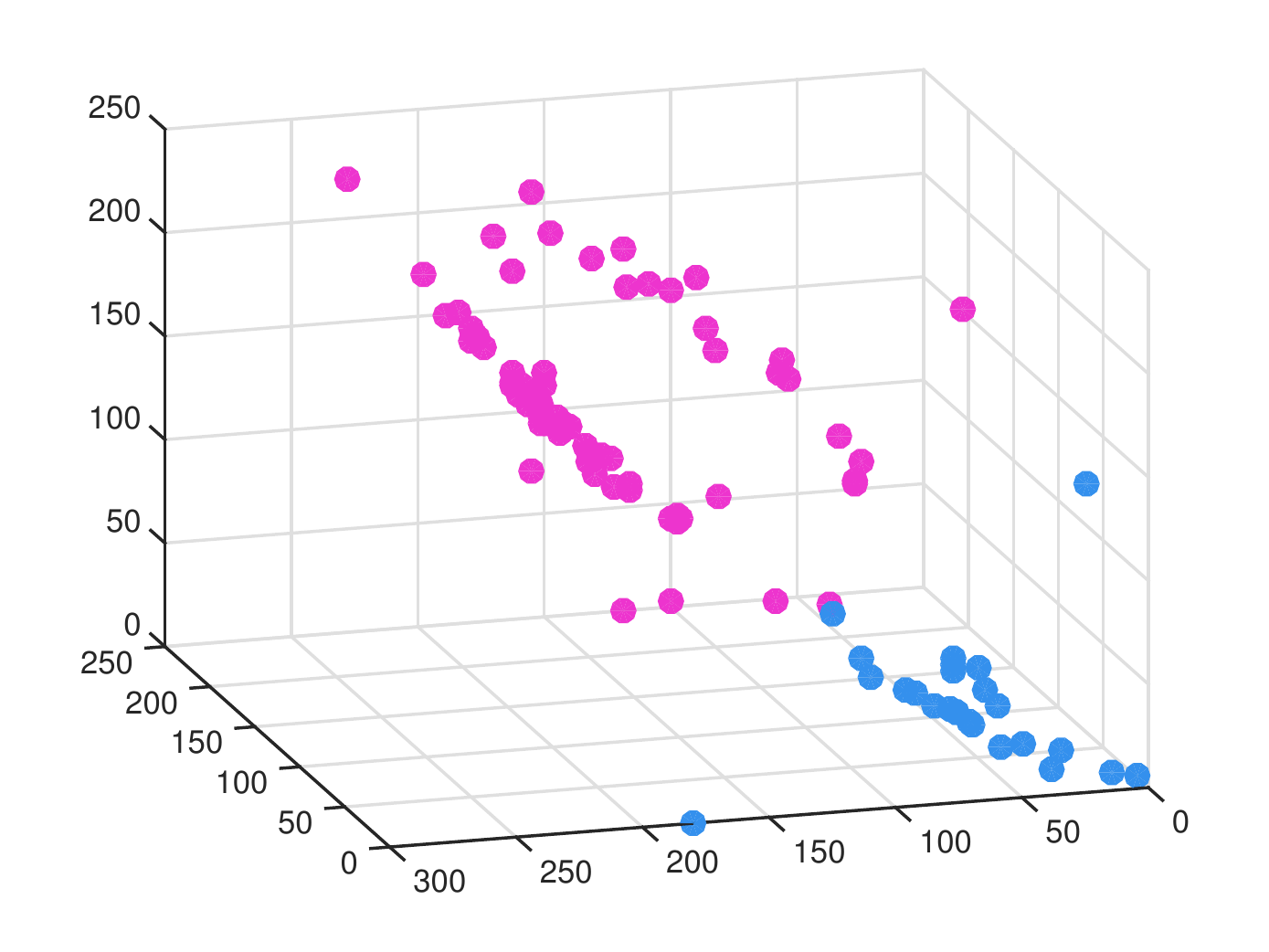}}\qquad
\subfigure[This Paper]{\includegraphics[width=0.4\linewidth]{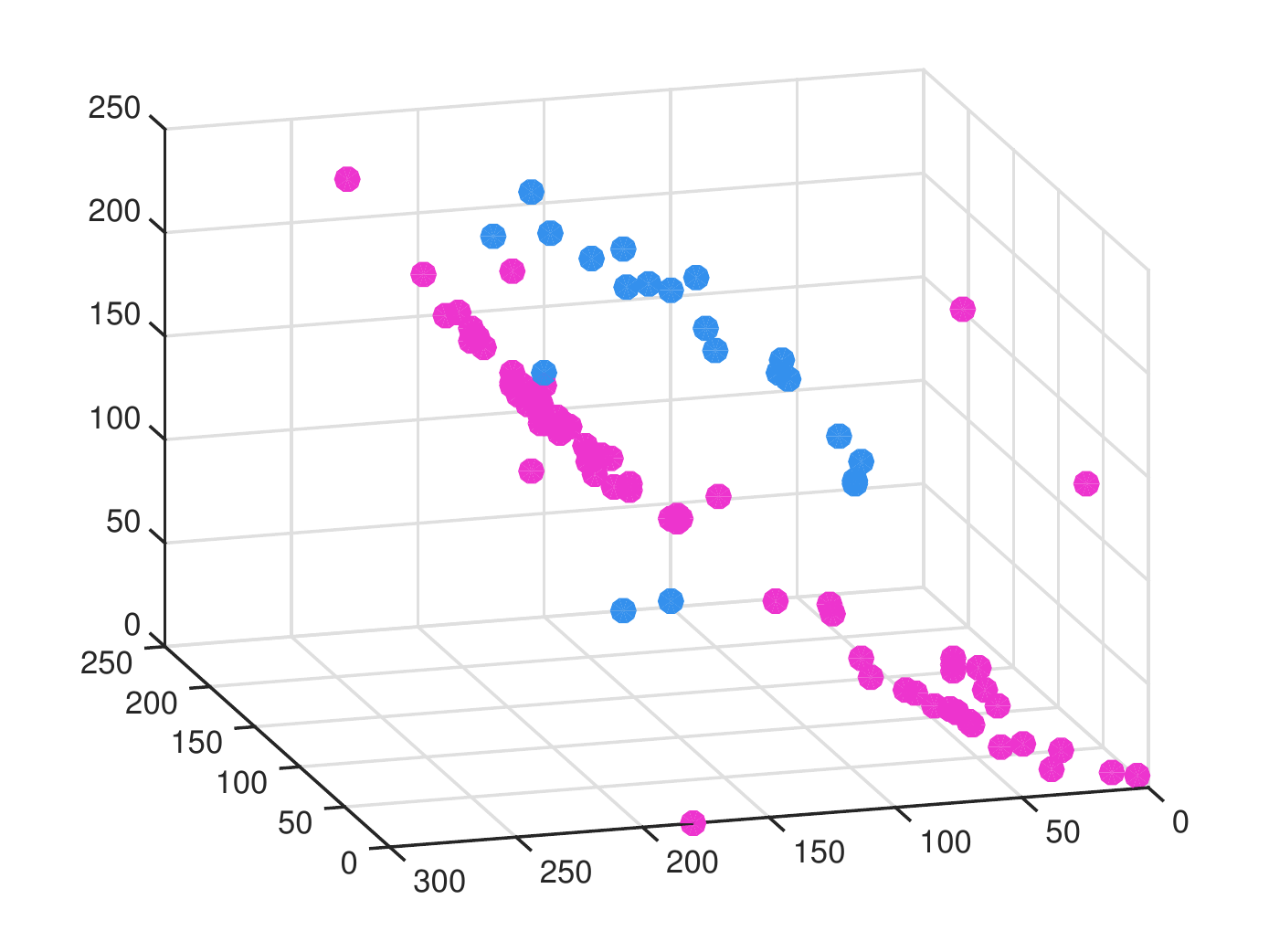}}\qquad
\caption{Results on the UCI Skin Segmentation dataset. Pink represents skin samples and blue represents non-skin samples. The axes correspond to RGB pixel values. The associated normalized mutual information scores are (b) 0.0042, (c) 0.1016 and (d) 0.6804.}
\label{fig:3D}
\end{figure*}

\begin{figure}[t]
\centering
        \includegraphics[width=\columnwidth]{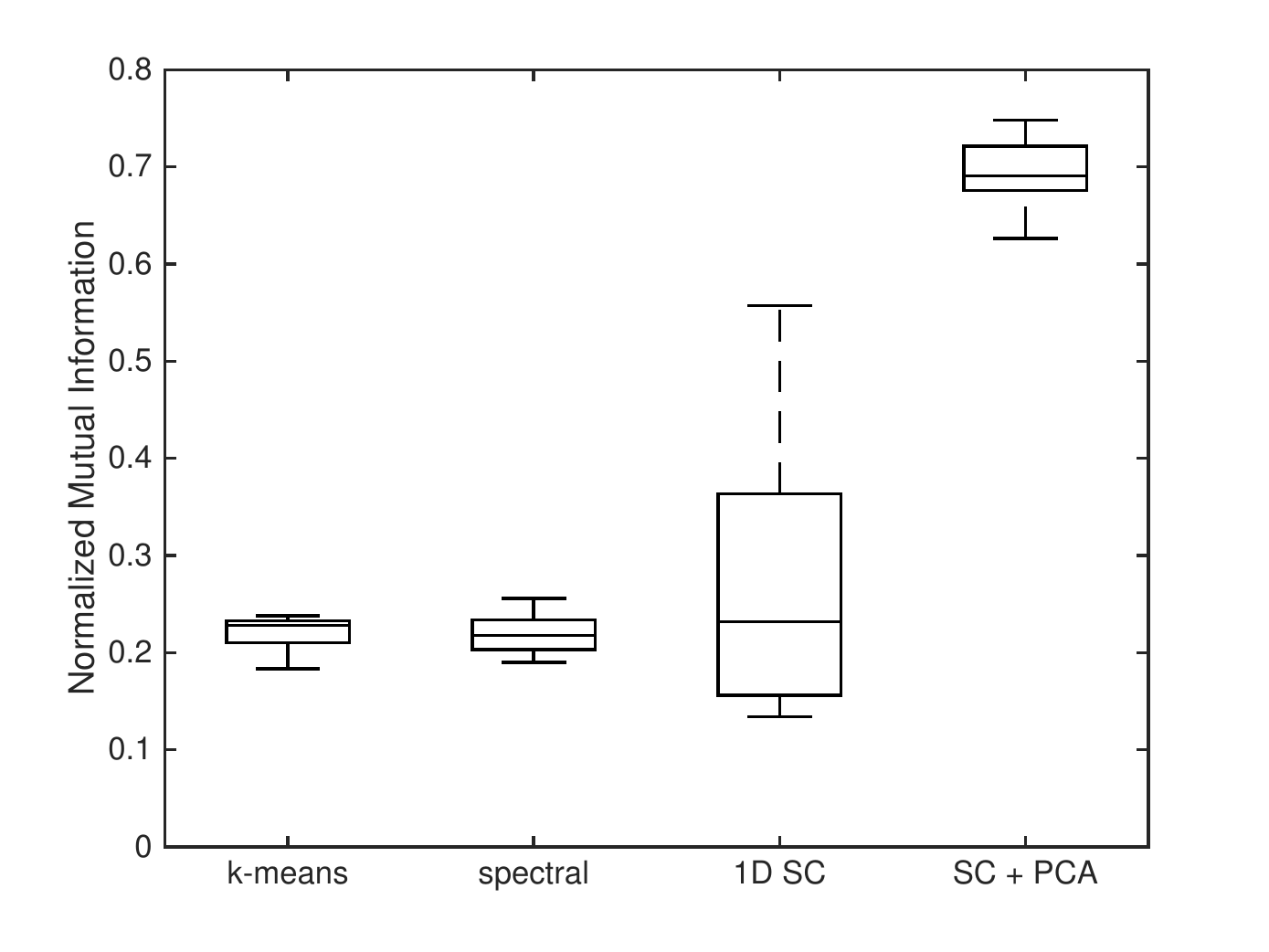}
        \cprotect\caption{Structured clustering with PCA (\verb!SC + PCA!) outperforms competitors on the 11 class, 48 dimensional UCI Sensorless Drive dataset. \verb!1D SC! is the same structured clustering model, except using the simpler Euclidean similarity function ($e_{ij} = ||v_i - v_j||_2$). Boxes correspond to the $25^{th}$ and $75^{th}$ percentile of 17 trials. Whiskers are the most extreme values.}
        \label{fig:motor}
\end{figure}

We will experimentally demonstrate the performance of the proposed model and algorithm on several synthetic and real world datasets. Specifically, we show studying edge features enables learning the \emph{structure} of clusters. When compared to k-means and spectral clustering, the planted partition model with general edge features can correctly cluster some rather interesting examples which are not attainable with scalar similarity functions. And more importantly, it seems to outperform existing methods on real world datasets.

In practice, it is unlikely we have access to $P_0$ and $P_1$. By assuming a prior parametric distribution, previous approaches have inferred these distributions while simultaneously learning the clustering. To remain as general as possible, we do not make any prior assumptions on $P_0$ and $P_1$. Our focus here is different. In datasets where the number of classes is very large or when new previously unseen classes are introduced, it is unlikely we can perform traditional supervised classification. This is especially true in the entity resolution and record linkage domains, where clusters correspond to millions or billions of entities (e.g. people, businesses, items) and new entities are frequently introduced. Statistical networks and image segmentation exhibit this property.

In these problems, we frequently have access to labeled \emph{pairs}. Human adjudicators are often able to say ``These two samples describe the same person'' or conversely ``these two samples describe different people.'' This is the information we use to learn the cluster structure. Thus we assume we have access to some labeled pairs in order to learn $P_0$ and $P_1$. Standard dimensionality reduction techniques can be employed to perform analysis in a reasonable space. We show this is still more powerful than using conventional scalar similarity functions. For all of our experiments, we use kernel density estimation to estimate $P_0$ and $P_1$. To improve the dimensional scalability, we could also perform a single estimation of $P_1/P_0$ using direct density ratio estimation \cite{Kanamori2009}

To compare performance we evaluate against k-means and spectral clustering \cite{Shi2000}. Per the recommendations of Luxburg \yrcite{Luxburg2007}, we use the Gaussian similarity function, a mutual $k$-nearest neighbor graph where $k=20$ and the random walk graph Laplacian. Unless otherwise noted, the edge features used for our method are from the absolute vector difference function $e_{ij} = |x=v_i - v_j|$ and thus not independent. However, the results indicate that it may be an acceptable assumption.

Lastly, our model consistently and naturally learns the correct number of clusters $k$. We found it occasionally labeled outlier samples as singleton clusters, though this would have a very small impact on the normalized mutual information score. For k-means and spectral clustering we do provide $k$ as an input. There are certainly methods of estimating $k$ for these competitors (e.g. analyzing the spectral gap), although they are not intrinsic to the methods.

\subsection{Results on Synthetic Data}
We consider the two interesting synthetic examples shown in Figure \ref{fig:2D}. Traditional clustering algorithms such as k-means and spectral clustering are unable to correctly label these examples because the clusters occasionally cross each other. Our method is able to capture the unidirectional cluster structure, and thus correctly label the samples. This is not an occasional event, in fact we have yet to see our method fail on these examples.

For all the synthetic experiments, we estimated $P_0$ and $P_1$ using 5,000 labeled pairs and clustered 100 hold-out samples. We use the absolute vector difference as our similarity function, which is able to capture the distance \emph{and} direction, unlike the Gaussian similarity function. There may be other excellent choices for similarity function, this is the only one we have tried so far.

We have achieved comparable results on the classic Gaussian, two moons, concentric circles and swiss roll examples. There was little distinction between our method and spectral clustering on these problems, so they were omitted from this paper.

\subsection{Results on Real World Data}
The first real world data we consider is the UCI Skin Segmentation dataset\footnote{https://archive.ics.uci.edu/ml/datasets/Skin+Segmentation}, shown in Figure \ref{fig:3D}. Samples are RGB values and labeled according to whether they are skin or non-skin image pixels. Again, we estimated $P_0$ and $P_1$ using 5,000 labeled pairs and clustered 100 hold-out samples, and use the absolute vector difference similarity function.

Visually, this seems much easier than the previous synthetic examples. However, k-means and spectral clustering are still unsuccessful due to the data scale issue introduced by the oblong cluster nature. Feature whitening did not help the competitors, though we believe some extensions to the standard spectral clustering may be able to handle this type of data \cite{Zelnik2004}.

The second realistic example we consider is the UCI Sensorless Drive Diagnosis dataset\footnote{https://archive.ics.uci.edu/ml/datasets/Dataset+for\\+Sensorless+Drive+Diagnosis}. Features are derived from current and frequency measurements in defective electric motors, including the statistical mean, standard deviation, skewness and kurtosis of intrinsic mode function subsequences. In total, there are 48 features and 11 classes.

We repeat the same previous procedures, except we additionally perform principal component analysis on the training and hold-out edge features prior to estimating $P_0$ and $P_1$ (\verb!SC + PCA!). We also consider one dimensional features using the Euclidean distance similarity function (\verb!1D SC!). The results from 17 trials are shown in Figure \ref{fig:motor}. The strong performance on the PCA reduced edge features leads us to believe that even if the original vertices have high dimensional structure, the distinguishing edge features in clusters have a lower dimensional representation.

\section{Conclusions}
Overall, incorporating multivariate edge features and more powerful similarity functions improves performance in all the experiments we have conducted. And even when the edges are clearly not independently generated, our structured clustering model still outperforms competitors.

The key insight from our approach is that multidimensional edge features can be used to effectively learn structure in clusters. Relationships in real world data are more complex than a simple scalar similarity function, and our methods can benefit from capturing that additional complexity. Then we can use the learned cluster structure to both determine the correct number of clusters and to handle situations where we are given new, previously unseen clusters, by assuming similar structure. 

Applications which may especially benefit from structured clustering usually (a) have some labeled edges to learn $P_0$ and $P_1$ and (b) have a large number of clusters which make training a supervised classifier impractical. For example, in community detection and entity resolution, we have many examples of communities or entities to learn $P_0$ and $P_1$, though we certainly do not have examples of every community and entity to perform classification. Intuitively, we expect communities and entities to exhibit some common behavior, and we can leverage this structure while clustering. In image segmentation we usually have many images with human labeled segments, but the segments (i.e.\ classes) in new images are likely of a different object. However, it is not unreasonable to assume the \emph{structure} of these new segments is similar to the previously seen segments. 

We used the approximation algorithm by Demaine et al. \yrcite{Demaine2006} for \textsc{MinimizeDisagreements}. The solution for this sub-problem is not the main focus of this paper, and unfortunately this particular algorithm requires solving a large linear program which limited the scalability of our experiments. 
Pan et al. recently clustered 1 billion samples in 5 seconds using a parallelizable, linear time algorithm for \textsc{MinimizeDisagreements}, but only with edge weight restrictions \yrcite{Pan2015}.


In future work, we intend to provide spectral solutions to this same problem, which we believe will help address scalability and provide better theoretical insights into the exact recoverability of the partition. A major goal of analyzing exact recoverability is understanding how to design smart, sparse and simple similarity functions. 

Other interesting extensions include applying the same method to stochastic block models, which would require estimating a separate $P_0$ and $P_1$ for every pair of blocks. In record linkage problems the same technique could be used to cluster vertices with different feature types. For example, clustering \emph{across} multiple social networks is of particular interest for advertising and law enforcement.

We have independently provided similar analysis for \textsc{MaximizeAgreements} by extending the results of Swamy \yrcite{Swamy2004} and Charikar et al. \yrcite{Charikar2003} to graphs with negative edge weights, though the theoretical and experimental results were not as convincing as \textsc{MinimizeDisagreements}, so thus omitted from this paper.

\bibliography{mybib}

\begin{thebibliography}{34}
\providecommand{\natexlab}[1]{#1}
\providecommand{\url}[1]{\texttt{#1}}
\expandafter\ifx\csname urlstyle\endcsname\relax
  \providecommand{\doi}[1]{doi: #1}\else
  \providecommand{\doi}{doi: \begingroup \urlstyle{rm}\Url}\fi

\bibitem[Aicher et~al.(2013)Aicher, Jacobs, and Clauset]{Aicher2013}
Aicher, Christopher, Jacobs, Abigail~Z., and Clauset, Aaron.
\newblock {Adapting the stochastic block model to edge-weighted networks}.
\newblock \emph{arXiv preprint}, 2013.

\bibitem[Ailon et~al.(2008)Ailon, Charikar, and Newman]{Ailon2008}
Ailon, Nir, Charikar, Moses, and Newman, Alantha.
\newblock {Aggregating inconsistent information}.
\newblock \emph{Journal of the ACM}, 55\penalty0 (5):\penalty0 1--27, 2008.

\bibitem[Airoldi et~al.(2008)Airoldi, Blei, Fienberg, and Xing]{Airoldi2008}
Airoldi, E, Blei, D, Fienberg, S, and Xing, E.
\newblock {Mixed membership stochastic blockmodels}.
\newblock \emph{Journal of Machine Learning Research}, 9:\penalty0 1981--2014,
  2008.

\bibitem[Balakrishnan et~al.(2011)Balakrishnan, Xu, Krishnamurthy, and
  Singh]{Balakrishnan2011}
Balakrishnan, Sivaraman, Xu, Min, Krishnamurthy, Akshay, and Singh, Aarti.
\newblock {Noise thresholds for spectral clustering}.
\newblock \emph{Advances in Neural Information Processing Systemsn}, pp.\
  1--9, 2011.

\bibitem[Ball et~al.(2011)Ball, Karrer, and Newman]{Ball2011}
Ball, Brian, Karrer, Brian, and Newman, M. E~J.
\newblock {Efficient and principled method for detecting communities in
  networks}.
\newblock \emph{Physical Review E - Statistical, Nonlinear, and Soft Matter
  Physics}, 84\penalty0 (3):\penalty0 1--14, 2011.

\bibitem[Bansal et~al.(2004)Bansal, Blum, and Chawla]{Bansal2004}
Bansal, Nikhil, Blum, Avrim, and Chawla, Shuchi.
\newblock {Correlation Clustering}.
\newblock \emph{Machine Learning}, 56\penalty0 (1-3):\penalty0 89--113, 2004.

\bibitem[Charikar et~al.(2003)Charikar, Guruswami, and Wirth]{Charikar2003}
Charikar, Moses, Guruswami, Venkatesan, and Wirth, Anthony.
\newblock {Clustering with Qualitative Information}.
\newblock In \emph{Foundations of Computer Science}, 2003.

\bibitem[Choi et~al.(2012)Choi, Wolfe, and Airoldi]{Choi2012}
Choi, D.~S., Wolfe, P.~J., and Airoldi, E.~M.
\newblock {Stochastic blockmodels with a growing number of classes}.
\newblock \emph{Biometrika}, 99\penalty0 (2):\penalty0 273--284, 2012.

\bibitem[Clauset et~al.(2007)Clauset, Moore, and Newman]{Clauset2007}
Clauset, Aaron, Moore, Cristopher, and Newman, M. E.~J.
\newblock {Structural Inference of Hierarchies in Networks}.
\newblock \emph{Statistical Network Analysis: Models, Issues, and New
  Directions.}, 2007.

\bibitem[Condon \& Karp(2001)Condon and Karp]{Condon2001}
Condon, Anne and Karp, Richard~M.
\newblock {Algorithms for graph partitioning on the planted partition model}.
\newblock \emph{Random Structures and Algorithms}, 18\penalty0 (2):\penalty0
  116--140, 2001.

\bibitem[Daudin et~al.(2008)Daudin, Picard, and Robin]{Daudin2008}
Daudin, J~J, Picard, F, and Robin, S.
\newblock {A mixture model for random graphs}.
\newblock \emph{Statistics and Computing}, 18\penalty0 (2):\penalty0 173--183,
  2008.

\bibitem[Demaine et~al.(2006)Demaine, Emanuel, Fiat, and
  Immorlica]{Demaine2006}
Demaine, Erik~D., Emanuel, Dotan, Fiat, Amos, and Immorlica, Nicole.
\newblock {Correlation clustering in general weighted graphs}.
\newblock \emph{Theoretical Computer Science}, 361:\penalty0 172--187, 2006.

\bibitem[Goemans \& Williamson(1995)Goemans and Williamson]{Goemans1995}
Goemans, M~X and Williamson, D~P.
\newblock {Improved approximation algorithms for maximum cut and satisfiability
  problems using semidefinite programming}.
\newblock \emph{Journal of the ACM}, 42\penalty0 (6):\penalty0 1115--1145,
  1995.

\bibitem[Goldenberg et~al.(2010)Goldenberg, Zheng, Fienberg, and
  Airoldi]{Goldenberg2010}
Goldenberg, Anna, Zheng, Alice~X., Fienberg, Stephen~E., and Airoldi,
  Edoardo~M.
\newblock {A Survey of Statistical Network Models}.
\newblock \emph{Foundations and Trends in Machine Learning}, 2\penalty0
  (2):\penalty0 129--233, 2010.

\bibitem[Hofman \& Wiggins(2008)Hofman and Wiggins]{Hofman2008}
Hofman, Jake~M. and Wiggins, Chris~H.
\newblock {Bayesian approach to network modularity}.
\newblock \emph{Physical Review Letters}, 100\penalty0 (25):\penalty0 1--4,
  2008.

\bibitem[Holland et~al.(1983)Holland, Laskey, and Leinhardt]{Holland1983}
Holland, Paul~W., Laskey, Kathryn~Blackmond, and Leinhardt, Samuel.
\newblock {Stochastic blockmodels: First steps}.
\newblock \emph{Social Networks}, 5\penalty0 (2):\penalty0 109--137, 1983.

\bibitem[Kanamori et~al.(2009)Kanamori, Hido, and Sugiyama]{Kanamori2009}
Kanamori, Takafumi, Hido, Shohei, and Sugiyama, Masashi.
\newblock {Efficient Direct Density Ratio Estimation for Non-stationarity
  Adaptation and Outlier Detection}.
\newblock \emph{Advances in Neural Information Processing Systems}, 2009.

\bibitem[Karrer \& Newman(2011)Karrer and Newman]{Karrer2011}
Karrer, Brian and Newman, Mark~EJ.
\newblock {Stochastic block models and community structure in networks}.
\newblock \emph{Physical Review}, 2011.

\bibitem[Kollios et~al.(2013)Kollios, Potamias, and Terzi]{Kollios2013}
Kollios, George, Potamias, Michalis, and Terzi, Evimaria.
\newblock {Clustering large probabilistic graphs}.
\newblock \emph{IEEE Transactions on Knowledge and Data Engineering},
  25\penalty0 (2):\penalty0 325--336, 2013.

\bibitem[Leighton \& Rao(1999)Leighton and Rao]{Leighton1999}
Leighton, Tom and Rao, Satish.
\newblock {Multicommodity max-flow min-cut theorems and their use in designing
  approximation algorithms}.
\newblock \emph{Journal of the ACM}, 46\penalty0 (6):\penalty0 787--832, 1999.

\bibitem[Mariadassou et~al.(2010)Mariadassou, Robin, and
  Vacher]{Mariadassou2010}
Mariadassou, Mahendra, Robin, Stephane, and Vacher, Corinne.
\newblock {Uncovering latent structure in valued graphs: A variational
  approach}.
\newblock \emph{Annals of Applied Statistics}, 4\penalty0 (2):\penalty0
  715--742, 2010.

\bibitem[McSherry(2001)]{McSherry2001}
McSherry, Frank.
\newblock {Spectral Partitioning of Random Graphs}.
\newblock \emph{Proceedings 2001 IEEE International Conference on Cluster
  Computing}, pp.\  529--537, 2001.

\bibitem[Milgram(1967)]{Milgram1967}
Milgram, Stanley.
\newblock {The Small-World Problem}.
\newblock \emph{Society}, 39\penalty0 (2):\penalty0 61--66, 1967.

\bibitem[Pan et~al.(2015)Pan, Papailiopoulos, Oymak, Recht, Ramchandran, and
  Jordan]{Pan2015}
Pan, Xinghao, Papailiopoulos, Dimitris, Oymak, Samet, Recht, Benjamin,
  Ramchandran, Kannan, and Jordan, Michael~I.
\newblock {Parallel Correlation Clustering on Big Graphs}.
\newblock \emph{Advances in Neural Information Processing Systems}, 2015.

\bibitem[Park et~al.(2010)Park, Moore, and Bader]{Park2010}
Park, Yongjin, Moore, Cristopher, and Bader, Joel~S.
\newblock {Dynamic networks from hierarchical Bayesian graph clustering}.
\newblock \emph{PLoS ONE}, 5\penalty0 (1), 2010.

\bibitem[Rohe et~al.(2011)Rohe, Chatterjee, and Yu]{Rohe2011}
Rohe, Karl, Chatterjee, Sourav, and Yu, Bin.
\newblock {Spectral clustering and the high-dimensional stochastic blockmodel}.
\newblock \emph{Annals of Statistics}, 39\penalty0 (4):\penalty0 1878--1915,
  2011.

\bibitem[Shi \& Malik(2000)Shi and Malik]{Shi2000}
Shi, J and Malik, J.
\newblock {Normalized Cuts and Image Segmentation}.
\newblock \emph{IEEE Transactions on Pattern Analysis and Machine
  Intelligence}, 22\penalty0 (8):\penalty0 888--905, 2000.

\bibitem[Swamy(2004)]{Swamy2004}
Swamy, Chaitanya.
\newblock {Correlation Clustering: maximizing agreements via semidefinite
  programming.}
\newblock \emph{SODA}, pp.\  526--527, 2004.

\bibitem[Thomas \& Blitzstein(2011)Thomas and Blitzstein]{Thomas2011}
Thomas, Andrew~C. and Blitzstein, Joseph~K.
\newblock {Valued Ties Tell Fewer Lies: Why Not To Dichotomize Network Edges
  With Thresholds}.
\newblock \emph{arXiv preprint}, 2011.

\bibitem[von Luxburg(2007)]{Luxburg2007}
von Luxburg, Ulrike.
\newblock {A tutorial on spectral clustering}.
\newblock \emph{Statistics and Computing}, 17\penalty0 (4):\penalty0 395--416,
  2007.

\bibitem[Wang \& Wong(1987)Wang and Wong]{Wang1987}
Wang, Yuchung~J and Wong, George~Y.
\newblock {Stochastic Blockmodels for Directed Graphs}.
\newblock \emph{Journal of the American Statistical Association}, 82\penalty0
  (397):\penalty0 8--19, 1987.

\bibitem[Xiaoran et~al.(2014)Xiaoran, {Rohilla Shalizi}, Jensen, Moore,
  Zdeboro, Zhang, and Zhu]{Xiaoran2007}
Xiaoran, Yan, {Rohilla Shalizi}, Cosma, Jensen, Jacob~E, Moore, Cristopher,
  Zdeboro, Lenka, Zhang, Pan, and Zhu, Yaojia.
\newblock {Model Selection for Degree-corrected Block Models}.
\newblock \emph{Journal of Statistical Mechanics: Theory and Experiment}, 5,
  2014.

\bibitem[Zachary(1977)]{Zachary1977}
Zachary, Wayne~W.
\newblock {An Information Flow Model for Conflict and Fission in Small Groups}.
\newblock \emph{Journal of Anthropological Research}, 33\penalty0 (4):\penalty0
  452--473, 1977.

\bibitem[Zelnik-Manor \& Perona(2004)Zelnik-Manor and Perona]{Zelnik2004}
Zelnik-Manor, Lihi and Perona, Pietro.
\newblock {Self-tuning spectral clustering}.
\newblock \emph{Advances in Neural Information Processing Systems 17},
  2:\penalty0 1601--1608, 2004.

\end{thebibliography}
\bibliographystyle{icml2016}

\end{document}